\newenvironment{proofsketch}{%
  \proof}{\endproof}
\definecolor{amethyst}{rgb}{0.6, 0.4, 0.8}
\newcommand\revision[1]{#1}
\def\eqref#1{equation~\ref{#1}}
\def\1{\bm{1}}
\DeclareMathAlphabet{\mathsfit}{\encodingdefault}{\sfdefault}{m}{sl}
\SetMathAlphabet{\mathsfit}{bold}{\encodingdefault}{\sfdefault}{bx}{n}
\newcommand{\R}{\mathbb{R}}
\newcommand{\Dt}{\Delta\theta}
\DeclareMathOperator{\dom}{dom}
\newcommand{\Lcal}{\mathcal{L}}
\newcommand{\RR}{\mathbb{R}} % Real numbers
\newcommand{\T}{^\top}
\theoremstyle{plain}
\newtheorem{theorem}{Theorem}[section]
\newtheorem{proposition}[theorem]{Proposition}
\newtheorem{lemma}[theorem]{Lemma}
\newtheorem{claim}[theorem]{Claim}
\theoremstyle{definition}
\newtheorem{assumption}[theorem]{Assumption}
\newtheorem{axiom}[theorem]{Axiom}
\theoremstyle{remark}
\newcommand{\ourmethod}{Nash-MTL}
\newtheorem*{theorem*}{Theorem}
\newcommand*\bigcdot{\mathpalette\bigcdot@{1.}}
\newcommand*\bigcdot@[2]{\mathbin{\vcenter{\hbox{\scalebox{#2}{$\m@th#1\bullet$}}}}}
\icmltitlerunning{Multi-Task Learning as a Bargaining Game}
\begin{document}

\twocolumn[
\icmltitle{Multi-Task Learning as a Bargaining Game}

% It is OKAY to include author information, even for blind
% submissions: the style file will automatically remove it for you
% unless you've provided the [accepted] option to the icml2022
% package.

% List of affiliations: The first argument should be a (short)
% identifier you will use later to specify author affiliations
% Academic affiliations should list Department, University, City, Region, Country
% Industry affiliations should list Company, City, Region, Country

% You can specify symbols, otherwise they are numbered in order.
% Ideally, you should not use this facility. Affiliations will be numbered
% in order of appearance and this is the preferred way.
\icmlsetsymbol{equal}{*}

\begin{icmlauthorlist}
\icmlauthor{Aviv Navon}{equal,biu}
\icmlauthor{Aviv Shamsian}{equal,biu}
\icmlauthor{Idan Achituve}{biu}
\icmlauthor{Haggai Maron}{nvidia}
\icmlauthor{Kenji Kawaguchi}{nus} \\
\icmlauthor{Gal Chechik}{biu,nvidia}
\icmlauthor{Ethan Fetaya}{biu}
\end{icmlauthorlist}

\icmlaffiliation{biu}{Bar-Ilan University, Ramat Gan, Israel}
\icmlaffiliation{nvidia}{Nvidia, Tel-Aviv, Israel}
\icmlaffiliation{nus}{National University of Singapore}

\icmlcorrespondingauthor{Aviv Navon}{aviv.navon@biu.ac.il}
\icmlcorrespondingauthor{Aviv Shamsian}{aviv.shamsian@live.biu.ac.il}

% You may provide any keywords that you
% find helpful for describing your paper; these are used to populate
% the "keywords" metadata in the PDF but will not be shown in the document
\icmlkeywords{Machine Learning, ICML}

\vskip 0.3in
]

% this must go after the closing bracket ] following \twocolumn[ ...

% This command actually creates the footnote in the first column
% listing the affiliations and the copyright notice.
% The command takes one argument, which is text to display at the start of the footnote.
% The \icmlEqualContribution command is standard text for equal contribution.
% Remove it (just {}) if you do not need this facility.

%\printAffiliationsAndNotice{}  % leave blank if no need to mention equal contribution
\printAffiliationsAndNotice{\icmlEqualContribution} % otherwise use the standard text.

\begin{abstract}
% In Multi-task learning (MTL), a joint model is trained to simultaneously make predictions for several tasks. Joint training reduces computation costs and improves data efficiency; however, since the gradients of these different tasks may conflict, training a joint model for MTL often yields lower performance than its corresponding single-task counterparts. A common method of handling this issue is to combine per-task gradients into a joint  update direction using a particular heuristic. In this paper, we propose viewing this combination as a bargaining game, where tasks negotiate in order to reach agreement on a joint direction of parameter update. Under certain assumptions, the bargaining problem has a unique solution, known as the \emph{Nash Bargaining Solution}, which we propose to use as a principled approach to multi-task learning. We describe a new MTL optimization procedure, Nash-MTL, and derive theoretical convergence guarantees. Empirically, we show that Nash-MTL achieves state-of-the-art results on multiple MTL benchmarks in various domains.

In Multi-task learning (MTL), a joint model is trained to simultaneously make predictions for several tasks. Joint training reduces computation costs and improves data efficiency; however, since the gradients of these different tasks may conflict, training a joint model for MTL often yields lower performance than its corresponding single-task counterparts. 
A common method for alleviating this issue is to combine per-task gradients into a joint update direction using a particular heuristic. In this paper, we propose viewing the gradients combination step as a bargaining game, where tasks negotiate to reach an agreement on a joint direction of parameter update. Under certain assumptions, the bargaining problem has a unique solution, known as the \emph{Nash Bargaining Solution}, which we propose to use as a principled approach to multi-task learning. We describe a new MTL optimization procedure, Nash-MTL, and derive theoretical guarantees for its convergence. Empirically, we show that Nash-MTL achieves state-of-the-art results on multiple MTL benchmarks in various domains.
\end{abstract}
\section{Introduction}
\label{sec:Inro}

% \begin{figure*}[t]
% \centering
%     \begin{subfigure}[Average loss surface]{
%     \includegraphics[width=0.2\linewidth]{}
%     \label{fig:toy_surface}
%     }
%     \end{subfigure}
%     % \hfill
%     \begin{subfigure}[Optimization trajectories in loss space]{
%     \includegraphics[width=0.75\linewidth]{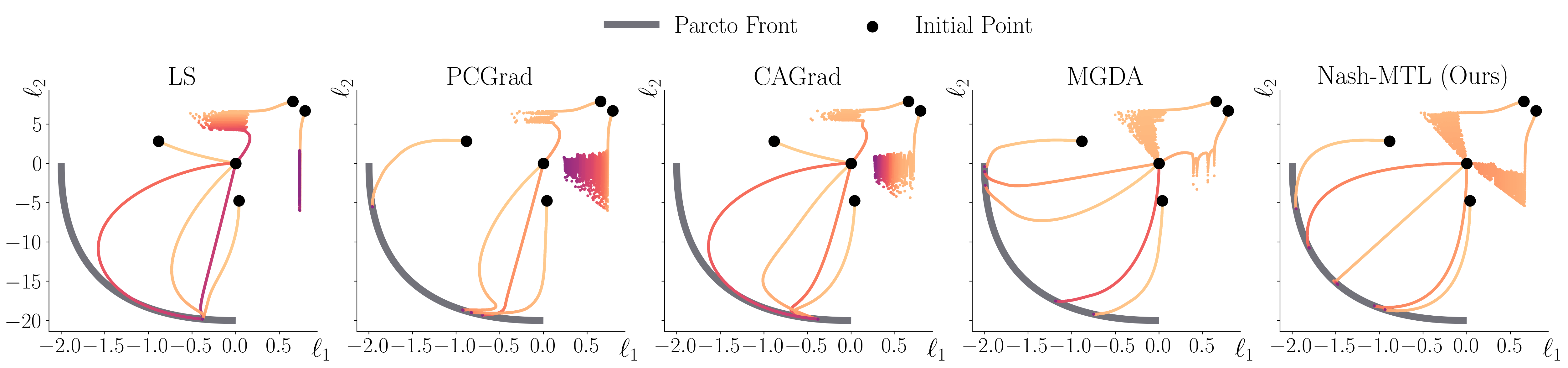}
%     \label{fig:pareto_2d}
%     }
%      \end{subfigure}
%     \caption{\textit{Illustrative example}. \textbf{(a)} The average loss surface; \textbf{(b)} Optimization trajectories in loss space (colored from orange to purple) with 5 different initializations. For linear scalarization (LS), PCGrad, and CAGrad, the optimization process is controlled by the gradient with a larger magnitude, resulting in imbalanced solutions between tasks. In addition, all of these methods fail to converge to an optimal solution for one of the initialization points. On the other hand, MGDA is biased towards the task with the smallest gradient magnitude. Our method, \textit{\ourmethod{}}, is invariant to changes in loss scale and produces solutions that are well balanced across the Pareto front.}
%     \label{fig:toy}
% \end{figure*}

\begin{figure*}[t]
    \centering
    
    \includegraphics[width=1.0\linewidth]{}
    
    \caption{\textit{Illustrative example}: Optimization trajectories in loss space. Shown are 5 different initializations (black dots $\bigcdot$), and their trajectories are colored from orange to purple. Losses have a large difference in scale. See Appendix~\ref{sec:appendix_illustrative} for details. For linear scalarization (LS), PCGrad, and CAGrad, the optimization process is controlled by the gradient of $\ell_2$, since it has a larger magnitude, resulting in imbalanced solutions between tasks (mostly ending at the bottom right). These three methods also fail to converge to an optimal solution for the rightmost initialization points. In contrast, MGDA is inclined towards the task with the smallest gradient magnitude ($\ell_1$). Our method, \textit{\ourmethod{}}, is invariant to changes in loss scale and produces solutions that are well balanced across the Pareto front.%\gal{(1) This is important because? (2) We make this claim in a very visible location, Fig 1, based on anecdotal evidence. Can we support by more evidence/ analysis?}}
    }
    \label{fig:toy}
\end{figure*}

In many real-world applications, one needs to solve several tasks simultaneously using limited computational or data resources. For example, perception for autonomous vehicles requires lane detection, object detection, and free-space estimation, which must all run in parallel and in real-time.  This is normally solved via multi-task learning (MTL), where one model is jointly trained on several learning tasks~\cite{caruana1997multitask,ruder2017overview,crawshaw2020multi}. Multi-task learning was also shown to improve generalization in theory \cite{Baxter00} and in practice \citep[e.g., auxiliary learning,][]{liu2019self,achituve2021self,NavonAMCF21}.

Unfortunately, MTL often causes performance degradation compared to single-task models~\cite{tasks_together}. 
%The main difficulty in MTL is the training phase which often results in worse training loss compared to single task performance. 
A main reason for such degradation is gradients conflict ~\cite{yu2020gradient,wang2020gradient,liu2021conflict}. These per-task gradients may have conflicting directions or a large difference in magnitudes, with the largest gradient dominating the update direction.
%A main reason for such degradation is that during training, gradients of different tasks may conflict \cite{yu2020gradient,liu2021conflict}, i.e., having negative cosine similarity values. In addition, different tasks may have varying scale gradients, with the largest gradient dominating the update direction.
The degraded performance of MTL due to poor training, compared with its potential to improve performance due to better generalization, has a major impact on many real-world systems.
Improving MTL optimization algorithms is therefore an important task with significant implications to many systems. %This leads to a great interest in improving MTL optimization algorithms. %The importance of MTL due to reduction in computational costs at inference time and better generalization, hindered by worse training performance, lead to great interest in improving MTL optimization algorithms.

Currently, most MTL optimization algorithms ~\cite{sener2018multi,yu2020gradient,liu2021conflict} follow a general scheme. First, compute the gradients for all tasks $g_1,...,g_K$. Next, combine those gradients into a joint direction, $\Delta=\mathcal{A}(g_1,...,g_K)$ using an aggregation algorithm $\mathcal{A}$. Finally, update model parameters using a single-task optimization algorithm, replacing the gradients with $\Delta$. Multiple heuristics were proposed for the aggregation algorithm $\mathcal{A}$. However, to the best of our knowledge, a principled, axiomatic, approach to gradient aggregation is still missing.

%\gal{needed a bit more discussion here. What would be the considerations. What counts as a good combination algorithm?}\ef{we talk about it next peragraph, e.g. proportionally fair. Isn't that enough?}\gal{The issue is flow. I rephrased a bit.} 
%\gal{I see commented text: "we argue that A must have 2 properties... It looks good, why commented? }
%\ef{Because it doesn't really explain why we are different. Our scale invariant baseline fulfills these properties as well.} \haggai{I agree with Gal that these two lines are important}
%We argue that a good combination algorithm must carry two important properties. First, it must converge to a (local) Pareto optimal point, i.e., no task can be further improved without hurting another. Second, the solution must be well balanced between tasks to ensure satisfactory performance across objectives.

Here we address the gradient combination step by viewing  it as a cooperative bargaining game \cite{coop_barg}. Each task represents a player, whose utility is derived from its gradient, and players negotiate to reach an agreed direction. This formulation allows us to use results from game theory literature that analyze this problem from an axiomatic perspective. In his seminal paper, \citet{nash} presented an axiomatic approach to the bargaining problem and showed that under certain axioms, the bargaining problem has a unique solution known as the \emph{Nash Bargaining Solution}. This solution is known to be proportionally fair%\gal{ \cite{}}
, where any alternative will have a negative average relative change. This proportionally fair update allows us to find a solution that works for all tasks without being dominated by a single large gradient.

Building on Nash's results, we propose a novel MTL optimization algorithm, named \textit{\ourmethod{}}, where the gradients are combined at each step using the Nash bargaining solution. We first characterize the Nash bargaining solution for MTL and derive an efficient algorithm to approximate its value. Then, we analyze our approach theoretically and establish convergence guarantees in the convex and nonconvex cases. Finally, we show empirically that our Nash-MTL approach achieves state-of-the-art results on four MTL benchmarks on a variety of challenges ranging from computer vision and quantum chemistry to reinforcement learning. \revision{To support future research and the reproducibility of the results, we make our source code publicly available at: \textcolor{magenta}{\url{https://github.com/AvivNavon/nash-mtl}}.}

\section{Background}
\label{sec:Background}
\subsection{Pareto Optimality}
Optimization for MTL is a specific case of multiple-objective optimization (MOO). Given objective functions $\ell_1,...,\ell_K$, the performance of solution a $x$ is measured by the vector of objective values $(\ell_1(x),...,\ell_K(x))$. One main property of MOO is that since there is no natural linear ordering on vectors it is not always possible to compare solutions so there is no clear optimal value. 

We say that a solution $x$ dominates $x'$ if it is better on one or more objectives and not worse on any other objectives. A solution that is not dominated by any other is called \emph{Pareto optimal}, and the set of all such solutions is called the \emph{Pareto front}. It is important to note that there is no clear way to select between different Pareto optimal solutions without additional assumptions or prior about the user preferences~\cite{navon2021learning}. For non-convex problems, a point is defined as local Pareto optimal if it is Pareto optimal in some open set containing it. \revision{Further, a point is called \emph{Pareto stationary} if there exists a convex combination of the gradients at this point that equals zero. Pareto stationarity is a necessary condition for Pareto optimality.}

% It is important to note that there is no clear way to select between different Pareto optimal solutions without additional assumptions or prior about the user preferences~\cite{navon2021learning}.

\subsection{Nash Bargaining Solution}
We provide a brief background on cooperative bargaining games and the Nash bargaining solution, see \citet{coop_barg} for more details. In a bargaining problem, we have $K$ players, each with their own utility function $u_i:A\cup\{D\}\rightarrow\R$, which they wish to maximize. $A$ is the set of possible agreements and $D$ is the disagreement point which the players default to if they fail to reach an agreement. We define the set of possible payoffs as $U=\{(u_1(x),...,u_K(x)):\,x\in A\}\subset\R^K$ and $d=(u_1(D),...,u_K(D))$. We assume $U$ is convex, compact and that there exists a point in $U$ that strictly dominates $d$, namely there exists a $u\in U$ such that $\forall i:u_i>d_i$.

\citet{nash} showed that for such payoff set $U$, the two-player bargaining problem has a unique solution% \gal{What about n-player game, still unique?}
that satisfies the following properties or axioms%\haggai{I find the use of the term" Axioms" here confusing. I would use "properties" instead}
: Pareto optimality, symmetry, independence of irrelevant alternatives, and invariant to affine transformations. This was later extended to multiple players \cite{game_theory}.
\begin{axiom}
\textbf{Pareto optimality:} The agreed solution must not be dominated by another option, i.e. there cannot be any other agreement that is better for at least one player and not worse for any of the players.
\end{axiom}
As it is a cooperative game, it makes little sense that the players will curtail another player without any personal gains, so it is natural to assume the agreed solution will not be dominated by another. 
\begin{axiom}
\textbf{Symmetry:} The solution should be invariant to permuting the order of the players.
\end{axiom}
\begin{axiom}
\textbf{Independence of irrelevant alternatives (IIA):} If we enlarge the of possible payoffs to 
% $U\subsetneq \tilde{U}$
$\tilde{U}\supsetneq U$, and the solution is in the original set $U$, $u^*\in U$, then the agreed point when the set of possible payoffs is $U$ will stay $u^*$.
\end{axiom}
\begin{axiom}
\textbf{Invariance to affine transformation:} If we transform each utility function $u_i(x)$ to $\tilde{u}_i(x)=c_i\cdot u_i(x)+b_i$ with $c_i>0$ then if the original agreement had utilities $(y_1,...,y_k)$ the agreement after the transformation has utilities $(c_1y_1+b_1,...,c_ky_k+b_k)$
\end{axiom}
% This, in our mind, is the only non-obvious assumption used by the Nash bargaining solution in our MTL setting. De facto, it means that our solution ignores the gradient norm. While this might seem problematic, as the gradient norm might hold important information, we claim this is important for MTL. If we do not normalize the gradient losses, the solution can easily be dominated by a single direction 
%We believe that in our MTL setting it is quite obvious that we wish our solution to satisfy axioms 2.1-2.3. Axiom 2.4, in our mind, is the only non-obvious assumption used by the Nash bargaining solution in our MTL setting. 
We argue that in the MTL setting, it is natural to require axioms 2.1-2.3. Axiom 2.4, in our mind, is the only non-natural assumption used by the Nash bargaining solution in the context of MTL. We argue that indeed it is a desired property that is helpful for MTL. Axiom 2.4 means that the solution does not take into account the gradients' norms but rather treats all of them the same, as if they were normalized. %We believe this approach by the Nash bargaining solution is actually important and helpful for MTL settings, 
%If we do not normalize the gradient losses, 
Without enforcing this assumption, the solution can easily be dominated by a single direction (see Figure~\ref{fig:toy}). We further validate the importance of this assumption by investigating a scale-invariant baseline in Section~\ref{sec:Exp}.

The unique point satisfying all these axioms is called the Nash bargaining solution and is given as 
\begin{align}\label{eq:nash_barg}
    u^*=&\arg\max_{u\in U}\sum_i\log(u_i-d_i)\\
    &s.t.\,\,\forall i:\,u_i>d_i\nonumber
\end{align}
%The original work in \cite{nash} deals with two-player games. The two-player bargaining problem was further extended to multiple players \cite{coop_barg} and the solution to Eq. \ref{eq:nash_barg} is called the Nash bargaining solution. 
%The original work in \cite{nash} deals with two-player games, and the uniqueness proof was later extended to multiple players \cite{game_theory}. The solution to Eq. \ref{eq:nash_barg} is called the Nash bargaining solution. 

\section{Method}
\label{sec:Method}

We now describe our Nash-MTL method in detail. We first formalize the gradient combination step as a bargaining game and analyze the Nash bargaining solution for this game. We then describe our algorithm to approximate the solution efficiently. We note that the computational cost of that approximation %runtime 
is critical because this approximation is executed for each gradient update. \revision{To simplify the notation, we do not distinguish between shared and task-specific parameters. We note, however, that task-specific parameters have no contribution to the Nash bargaining solution calculation.}

\subsection{Nash Bargaining Multi-Task Learning}
Given an MTL optimization problem and parameters $\theta$, we search for an update vector $\Delta\theta$ in the ball of radius $\epsilon$ centered around zero, $B_\epsilon$. We frame this as a bargaining problem with the agreement set $B_\epsilon$ and the disagreement point at $0$, i.e., staying at the current parameters $\theta$. We define the utility function for each player as $u_i(\Dt)=g_i\T \Dt$ where $g_i$ is the gradient of the loss of task $i$ at $\theta$. We note that since the agreement set is compact and convex and the utilities are linear then the set of possible payoffs is also compact and convex.

Our main assumption, besides the ones used by Nash, is that if $\theta$ is not Pareto stationary then the gradients are linearly independent (see further discussion on this assumption in Section \ref{sec:Analysis}). Under this assumption, we also have that the disagreement point, $\Dt=0$ is dominated by another in $B_\epsilon$. We now show that if $\theta$ is not on the Pareto front, the unique Nash bargaining solution has the following form: %$\Dt^*=%\arg\max_{\Dt\in B_\epsilon}\prod u_i(\Dt)=%\arg\max_{\Dt\in B_\epsilon}\sum log(u_i(\Dt))=
%\arg\max_{\Dt\in B_\epsilon}\sum \log(\Dt\T g_i)$ according to eq. \ref{eq:nash_barg}.

\begin{claim}
Let $G$ be the $d\times K$ matrix whose columns are the gradients $g_i$. The solution to $\arg\max_{\Dt\in B_\epsilon}\sum_i \log(\Dt\T g_i)$ is (up to scaling) $\sum_i\alpha_i g_i$ where $\alpha\in\mathbb{R}^K_+$ is the solution to $G\T G\alpha=1/\alpha$ where $1/\alpha$ is the element-wise reciprocal.  

\end{claim}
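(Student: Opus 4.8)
The plan is to treat this as a constrained concave maximization and read the claimed characterization off its KKT conditions.

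\textbf{Step 1 (reduce to the sphere and to $\mathrm{span}(g_i)$).} Along any ray $\Dt\mapsto t\Dt$ with $t>1$, the objective $\sum_i\log(g_i\T\Dt)$ increases by $K\log t$, so the constraint $\|\Dt\|\le\epsilon$ is active at any maximizer; likewise, the component of $\Dt$ orthogonal to $\mathrm{span}(g_1,\dots,g_K)$ does not affect the objective but consumes norm budget, so the maximizer lies in $\mathrm{span}(g_1,\dots,g_K)$. Because the $g_i$ are linearly independent, $G\T:\R^d\to\R^K$ is onto, so there exists $\Dt$ with $g_i\T\Dt>0$ for all $i$ and $\|\Dt\|<\epsilon$; hence the feasible set (including the implicit domain constraints $g_i\T\Dt>0$ from the logarithms) has nonempty interior, the program is concave, and KKT conditions are both necessary and sufficient for optimality.

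\textbf{Step 2 (stationarity).} Writing the Lagrangian $L(\Dt,\mu)=\sum_i\log(g_i\T\Dt)-\mu(\|\Dt\|^2-\epsilon^2)$ and setting $\nabla_{\Dt}L=0$ gives $\sum_i g_i/(g_i\T\Dt)=2\mu\,\Dt$. If $\mu=0$ this would be a nontrivial linear relation among the $g_i$ (all coefficients $1/(g_i\T\Dt)$ are nonzero), contradicting linear independence; hence $\mu>0$. Setting $\alpha_i:=1/(2\mu\,g_i\T\Dt)$, which is strictly positive since $g_i\T\Dt>0$, the stationarity equation becomes exactly $\Dt=\sum_i\alpha_i g_i=G\alpha$ with $\alpha$ in the positive orthant of $\R^K$.

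\textbf{Step 3 (the fixed-point equation).} Substituting $\Dt=G\alpha$ into the definition of $\alpha_i$ yields $(G\T G\alpha)_i=g_i\T\Dt=1/(2\mu\,\alpha_i)$, i.e. $G\T G\alpha=\tfrac{1}{2\mu}\,(1/\alpha)$ element-wise. For each fixed $\lambda>0$ the equation $G\T G\alpha=\lambda/\alpha$ has a unique positive solution — it is the stationarity condition of the strictly convex, coercive function $\phi(\alpha)=\tfrac12\alpha\T G\T G\alpha-\lambda\sum_i\log\alpha_i$ on the positive orthant (strict convexity uses that $G\T G$ is positive definite, which holds since $G$ has full column rank) — and these solutions for different $\lambda$ are positive scalar multiples of one another. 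Hence, up to the single scalar pinned down jointly by $\mu$ and $\epsilon$, the optimal update direction is $\sum_i\alpha_i g_i$ with $\alpha$ the positive solution of $G\T G\alpha=1/\alpha$, which is the claim.

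\textbf{Step 4 (existence/uniqueness and the main obstacle).} Existence and uniqueness of the Nash bargaining solution $u^*$ in payoff space follow from Nash's theorem, since $U=\{G\T\Dt:\Dt\in B_\epsilon\}$ is compact and convex and $d=0$ is strictly dominated under our assumption; combined with Step 1, the optimal $\Dt^*$ lies in $\mathrm{span}(g_i)$, on which $G\T$ is injective, so $\Dt^*$, and hence $\alpha=(G\T G)^{-1}G\T\Dt^*$, is determined up to the noted scaling. I expect the only real nuisance to be this scaling bookkeeping: the objective is posed over $\Dt\in\R^d$ while Nash's uniqueness lives in $\R^K$, and both the multiplier $\mu$ and the radius $\epsilon$ inject scalar factors that must be folded into the ``up to scaling'' qualifier. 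Once the reduction to $\mathrm{span}(g_i)$ is in place, everything else is a routine Lagrange-multiplier computation.
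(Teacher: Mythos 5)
Your argument is correct and follows essentially the same route as the paper's proof: the constraint is active at the optimum, stationarity on the sphere forces $\sum_i g_i/(g_i\T\Dt)$ to be parallel to $\Dt$, linear independence gives $\Dt=\sum_i\alpha_i g_i$ with $\alpha_i\propto 1/(g_i\T\Dt)>0$, and the multiplier is absorbed into the ``up to scaling'' qualifier to yield $G\T G\alpha=1/\alpha$. Your explicit KKT/Slater justification and the uniqueness of the positive solution via the strictly convex, coercive potential $\tfrac12\alpha\T G\T G\alpha-\lambda\sum_i\log\alpha_i$ are welcome refinements that the paper leaves implicit, but they do not constitute a different approach.
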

\begin{proof}
The derivative of this objective is $\sum_{i=1}^K\frac{1}{\Dt\T g_i}g_i$. For all vectors $\Dt$ such that $\forall i:\Dt^Tg_i>0$ the utilities are monotonically increasing with the norm of $\Dt$. Thus, from the Pareto optimality assumption by Nash, the optimal solution has to be on the boundary of $B_\epsilon$. From this we see that the gradient at the optimal point $\sum_{i=1}^K\frac{1}{\Dt\T g_i}g_i$ must be in the radial direction, i.e., $\sum_{i=1}^K\frac{1}{\Dt\T g_i}g_i\parallel\Dt$ or $\sum_{i=1}^K\frac{1}{\Dt\T g_i}g_i=\lambda\Dt$. Since the gradients are independent we must have $\Dt=\sum_i \alpha_ig_i$ and $\forall i:\frac{1}{\Dt\T g_i}=\lambda \alpha_i$ or $\forall i:{\Dt\T g_i}=\frac{1}{\lambda \alpha_i}$. As the inner product must be positive for a descent direction we can conclude $\lambda>0$; we set $\lambda=1$ to ascertain the direction of $\Dt$ (the norm might be larger then $\epsilon$). Now finding the bargaining solution is reduced to finding $\alpha\in\mathbb{R}^K$ with $\alpha_i>0$ such that $\forall i:{\Dt\T g_i}=\sum_j \alpha_jg_j\T g_i=\frac{1}{\alpha_i}$. This is equivalent to requiring that $G\T G\alpha=1/\alpha$ where $1/\alpha$ is the element-wise reciprocal.
\end{proof}

We now provide some intuition for this solution. First, if all $g_i$ are orthogonal we get $\alpha_i=1/||g_i||$ and $\Dt=\sum\frac{g_i}{||g_i||}$ which is the obvious scale invariant solution. When they are not orthogonal, we get 
\begin{equation}\label{eq:intuition}
    \alpha_i||g_i||^2+\sum_{j\neq i}\alpha_jg_j\T g_i=1/\alpha_i
\end{equation}
We can consider $\sum_{j\neq i}\alpha_jg_j\T g_i=\left(\sum_{j\neq i}\alpha_jg_j\right)\T g_i$ as the interaction between task $i$ and the other tasks; If it is positive there is a positive interaction and the other gradients aid the $i$'th task, and if it is negative they hamper it. When there is a negative interaction, the LHS of Eq. \ref{eq:intuition} decreases and as a result, $\alpha_i$ increases to compensate for it. Conversely, where there is a positive interaction $\alpha_i$ will decrease.

% We now get that when there is a negative interaction it reduces the left hand side of Eq. \ref{eq:intuition}, so $1/\alpha_i$ must also decrease or $\alpha_i$ must increase. The opposite happens when there is positive interaction.
% \begin{algorithm}[t]
%     \caption{\ourmethod{} \an{add algorithm for CCP?}}\small\label{alg:nash-mtl}
%     \begin{algorithmic}[H]
%     \State \textbf{input:} $\theta^{(0)}$ -- initial parameter vector, $\{\ell_i\}_{i=1}^K$ -- differentiable loss functions, $\eta$ -- learning rate
%     \For{$t=1,...,T$}
%     \State compute task gradients $g^{(t)}_i=\nabla_{\theta^{(t-1)}}\ell_i$.
%     \State set $G^{(t)}$ the matrix with columns $g^{(t)}_i$.
%     \State Solve for $\alpha$: $(G^{(t)})^T G^{(t)}\alpha=1/\alpha$ to obtain $\alpha^{(t)}$
%     \State update the parameters $\theta^{(t)}=\theta^{(t)}-\eta G^{(t)}\alpha^{(t)}$
%     \EndFor
%     \State \textbf{return:} $\theta^{(T)}$
%     \end{algorithmic}
% \end{algorithm}

\begin{algorithm}[t]
    \caption{\ourmethod{}}\small\label{alg:nash-mtl}
    \begin{algorithmic}[H]
    \STATE {\bfseries Input:} $\theta^{(0)}$ -- initial parameter vector, $\{\ell_i\}_{i=1}^K$ -- differentiable loss functions, $\eta$ -- learning rate
    \FOR{$t=1,...,T$}
    \STATE Compute task gradients $g^{(t)}_i=\nabla_{\theta^{(t-1)}}\ell_i$
    \STATE Set $G^{(t)}$ the matrix with columns $g^{(t)}_i$
    \STATE Solve for $\alpha$: $(G^{(t)})\T G^{(t)}\alpha=1/\alpha$ to obtain $\alpha^{(t)}$
    \STATE Update the parameters $\theta^{(t)}=\theta^{(t)}-\eta G^{(t)}\alpha^{(t)}$
    \ENDFOR
    \STATE {\bfseries Return:} $\theta^{(T)}$
    \end{algorithmic}
\end{algorithm}

\begin{figure*}[t]
    \centering
    
    \includegraphics[width=.95\linewidth]{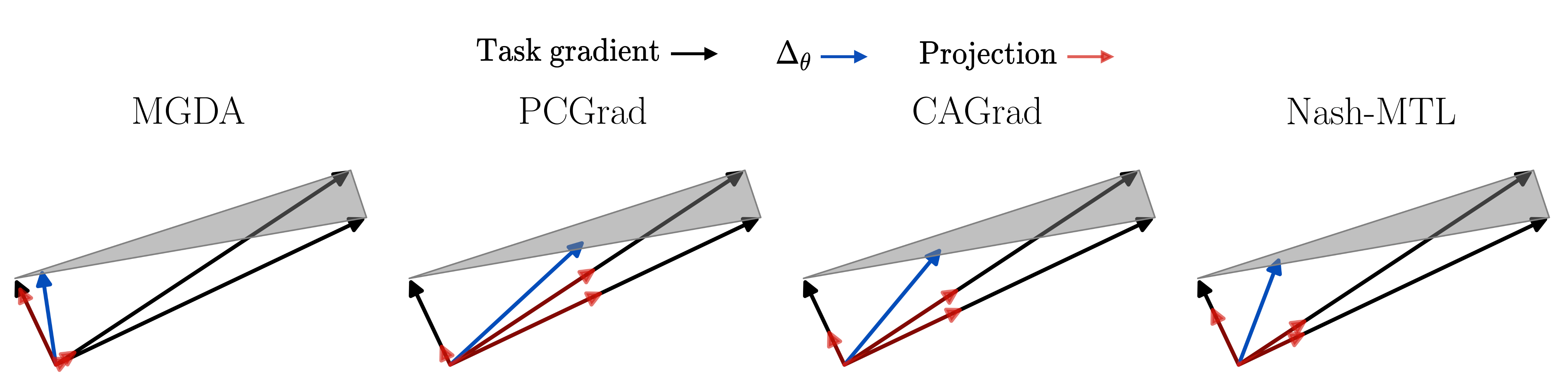}
    
    \caption{\textit{Visualization of the update direction}: We show the update direction {(blue)} obtained by various methods on three gradients in $\R^3$. We rescaled the returned vectors for better visibility, showing only the direction. We further show the size of the projection {(red)} of the update to each gradient direction {(black)}. \ourmethod{} produce an update direction with the most balanced projections.}
    \label{fig:update}
\end{figure*}

\subsection{Solving $\mathbf{G\T G\alpha }=1/\alpha$}
\label{sec:ccp}

Here we describe how to efficiently approximate the optimal solution for $G\T G\alpha=1/\alpha$ through a sequence of convex optimization problems. We define a $\beta_i(\alpha)=g_i\T G\alpha$, and wish to find $\alpha$ such that $\alpha_i=1/\beta_i$ for all $i$, or equivalently $\log(\alpha_i)+\log(\beta_i(\alpha))=0$. Denote $\varphi_i(\alpha)=\log(\alpha_i)+\log(\beta_i)$ and $\varphi(\alpha)=\sum_i \varphi_i(\alpha)$. 
With that, our goal is to find a non-negative $\alpha$ such that $\varphi_i(\alpha)=0$ for all $i$. We can write this as the following optimization problem
\begin{align}
    \min_{\alpha} &\sum_i \varphi_i(\alpha) \\
    \text{s.t.} \forall i, \quad & -\varphi_i(\alpha) \leq 0 \nonumber\\
    &\alpha_i > 0 \nonumber \quad.
\end{align} %This can be cast as an optimization problem $\min_{\alpha} \varphi(\alpha)$ subject to $\varphi_i(\alpha)\geq 0$ and $\alpha_i\geq 0$.
The constraints in this problem are convex and linear and the objective is concave. We first try to solve the following convex surrogate objective%We first try to solve
%As a first approximation, we note that  $\varphi_i$ and $\beta_i$ are concave and linear respectively w.r.t $\alpha$. Thus, we can efficiently solve a relaxation of our original problem by only requiring $\varphi_i\geq 0$ and minimizing $\sum \beta_i$, i.e., 
\begin{align}\label{eq:ccp_opt_take_1}
    \min_{\alpha} &\sum_i \beta_i(\alpha) \\
    \text{s.t.} \forall i, \quad & -\varphi_i(\alpha) \leq 0 \nonumber\\
    &\alpha_i > 0 \quad. \nonumber 
\end{align}
Here, we minimize $\sum_i\beta_i$ under the constraint $\beta_i=g_i\T G\alpha \geq 1/\alpha_i$. While this objective is not equivalent to the original problem, we found it very useful. In many cases, it produces exact solutions with $\varphi(\alpha)=0$ as required. 

% To further improve our approximation, when the solution is not exact, we now consider the following problem,
To further improve our approximation, we considered the following problem,
\begin{align}\label{eq:ccp_opt}
    \min_{\alpha} &\sum_i \beta_i(\alpha) + \varphi(\alpha) \\
    \text{s.t.} \forall i, \quad & -\varphi_i(\alpha) \leq 0 \nonumber\\
    &\alpha_i > 0 \quad.\nonumber
\end{align}
Adding $\varphi(\alpha)$ to the objective may further reduce it, moving it closer to zero; however, it renders the problem to be non-convex. 
%Despite that, our solution can be used by an iterative solver %using the convex-concave procedure 
Despite that, our solution can be improved iteratively by replacing the concave term $\varphi(\alpha)$ with its first-order approximation $\tilde{\varphi}_\tau(\alpha)=\varphi(\alpha^{(\tau)}) + \nabla\varphi(\alpha^{(\tau)})\T (\alpha-\alpha^{(\tau)})$. Where, $\alpha^{(\tau)}$ is the solution at iteration $\tau$. Note that we replace $\varphi$ with $\tilde{\varphi}$ only in the objective and keep $\varphi(\alpha)$ as is in the constraint: i.e., $\min_{\alpha} \sum_i \beta_i(\alpha) + \tilde \varphi_\tau(\alpha) \text{ s.t. } -\varphi_i(\alpha) \leq 0 \text { and } \alpha_i > 0$ for all $i$. This sequential optimization approach is a variation of the concave-convex procedure (CCP) \citep{yuille2003concave,lipp2016variations}. Therefore the sequence $\{\alpha^{(\tau)}\}_\tau$ converges to a critical point of the original non-convex problem in Eq.~\ref{eq:ccp_opt} based on previous theory of CCP by~\citet{sriperumbudur2009convergence}. Moreover, since we do not modify the constraint, $\alpha^{(\tau)}$ always satisfies the constraint of the original problem for any $\tau$. Finally, the following proposition shows that original objective monotonically decreases with $\tau$:
\begin{proposition}
Denote the objective for the optimization problem in Eq.~\ref{eq:ccp_opt} by $\phi(\alpha)=\sum_i\beta_i(\alpha)+\varphi(\alpha)$. Then, $\phi\left(\alpha^{(\tau+1)}\right)\leq \phi\left(\alpha^{(\tau)}\right)$ for all $\tau\geq 1$.
\end{proposition}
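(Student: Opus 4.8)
The plan is to carry out the standard majorization--minimization (CCP) argument. The one structural fact that makes it work is that $\varphi$ is concave on the feasible region: each $\varphi_i(\alpha)=\log(\alpha_i)+\log(\beta_i(\alpha))$ is the sum of the logarithm of a coordinate and the logarithm of the linear map $\beta_i(\alpha)=g_i\T G\alpha$, both concave, so $\varphi=\sum_i\varphi_i$ is concave. Hence its first-order expansion $\tilde\varphi_\tau(\alpha)=\varphi(\alpha^{(\tau)})+\nabla\varphi(\alpha^{(\tau)})\T(\alpha-\alpha^{(\tau)})$ is a global overestimator, $\varphi(\alpha)\le\tilde\varphi_\tau(\alpha)$ for every $\alpha$, with equality at $\alpha=\alpha^{(\tau)}$.

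First I would observe that $\alpha^{(\tau)}$ is itself feasible for the surrogate problem solved at iteration $\tau$ --- the constraints $-\varphi_i(\alpha)\le0$ and $\alpha_i>0$ are the same at every iteration and $\alpha^{(\tau)}$ satisfied them at step $\tau-1$ --- and that $\alpha^{(\tau+1)}$ is by definition a minimizer of $\sum_i\beta_i(\alpha)+\tilde\varphi_\tau(\alpha)$ over this set. Then the claim follows from the chain
\begin{align*}
\phi\big(\alpha^{(\tau+1)}\big)
&=\sum_i\beta_i\big(\alpha^{(\tau+1)}\big)+\varphi\big(\alpha^{(\tau+1)}\big)\\
&\le\sum_i\beta_i\big(\alpha^{(\tau+1)}\big)+\tilde\varphi_\tau\big(\alpha^{(\tau+1)}\big)\\
&\le\sum_i\beta_i\big(\alpha^{(\tau)}\big)+\tilde\varphi_\tau\big(\alpha^{(\tau)}\big)\\
&=\sum_i\beta_i\big(\alpha^{(\tau)}\big)+\varphi\big(\alpha^{(\tau)}\big)=\phi\big(\alpha^{(\tau)}\big),
\end{align*}
where the first inequality is the overestimator property of $\tilde\varphi_\tau$, the second is optimality of $\alpha^{(\tau+1)}$ combined with feasibility of $\alpha^{(\tau)}$ for the surrogate problem, and the final equality uses exactness of the linearization at the expansion point.

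The argument is essentially bookkeeping, so I do not anticipate a genuine obstacle; the only point deserving explicit mention is that the constraint is left untouched (only the objective is linearized), which is exactly what guarantees that $\alpha^{(\tau)}$ remains an admissible comparison point and that every iterate stays feasible for the original problem. Stating the concavity of $\varphi$ cleanly is the one ingredient I would not skip, since it is what licenses both the global-overestimator inequality and, via the earlier reference to \citet{sriperumbudur2009convergence}, the convergence to a critical point.
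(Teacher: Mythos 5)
Your proof is correct and follows essentially the same argument as the paper: exactness of the linearization at $\alpha^{(\tau)}$, the global-overestimator property of $\tilde\varphi_\tau$ from concavity of $\varphi$, and optimality of $\alpha^{(\tau+1)}$ for the surrogate problem with unchanged constraints, chained in the same order. Your explicit verification that $\varphi$ is concave and that $\alpha^{(\tau)}$ remains feasible is a slightly more careful bookkeeping of facts the paper takes for granted, but the route is identical.
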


We provide proof and further discussion in Appendix~\ref{sec:Appenix_proofs}. In practice, we limit the sequence of CCP to 20 in all experiments, with the exception of Section~\ref{sec:mt10} for which we use a single step. We found the improved solution to have a limited effect on the MTL performance (see Appendix~\ref{sec:Appendix_ccp_steps}).

\subsection{Practical Speedup}
\label{sec:speedup}
One shortcoming of many leading MTL methods is that all task gradients are required for obtaining the joint update direction. When the number of tasks $K$  becomes large, this may be too computationally expensive as it requires one to perform $K$ backward passes through the shared backbone to compute the $K$ gradients. Prior work suggested using a subset of tasks~\cite{liu2021conflict} or replacing the task gradients with the feature level gradient~\cite{sener2018multi,liu2020towards,javaloy2021rotograd} as potential practical speedups. We emphasize that this issue is not unique to our method, but rather is shared to all methods that compute all gradients for all tasks.

In practice, we found that using feature-level gradients as a surrogate to the gradient of the shared parameters dramatically degrades the performance of our method. See Appendix~\ref{sec:Appendix_speedup_feature_level} for empirical results and further discussion. As an alternative, we suggest updating the gradient weights $\alpha^{(t)}$ once every few iterations instead of every iteration. This simple yet effective solution greatly reduces the runtime (up to $\sim\times 10$ for QM9 and $\sim\times 5$ for MT10) while maintaining high performance. In Section~\ref{sec:scaling_up_nash_mtl}
we provide experimental results while varying the frequency of task weights update on the QM9 dataset and the MT10 benchmark. Our results show that \ourmethod{} runtime can be reduced to about the same as linear scalarization (or STL) while maintaining competitive results compared to other baselines; However, in some cases, we do see a noticeable drop in performance compared with our standard approach. 
% Specifically, we show that when updating task weights once every 100 steps in the MT10 benchmark, \ourmethod{} outperforms all MTL methods while being only $\sim\times1.1$ slower than the fastest baseline.

\section{Related Work}
\label{sec:Related}
In multitask learning (MTL), one simultaneously solves several learning problems while sharing information among tasks~\cite{caruana1997multitask, ruder2017overview}, commonly through a joint hidden representation~\cite{zhang2014facial,dai2016instance,pinto2017learning,zhao2018modulation,Liu2019EndToEndML}.
\begin{figure}[!t]
    \centering
    \includegraphics[width=1.\linewidth, clip]{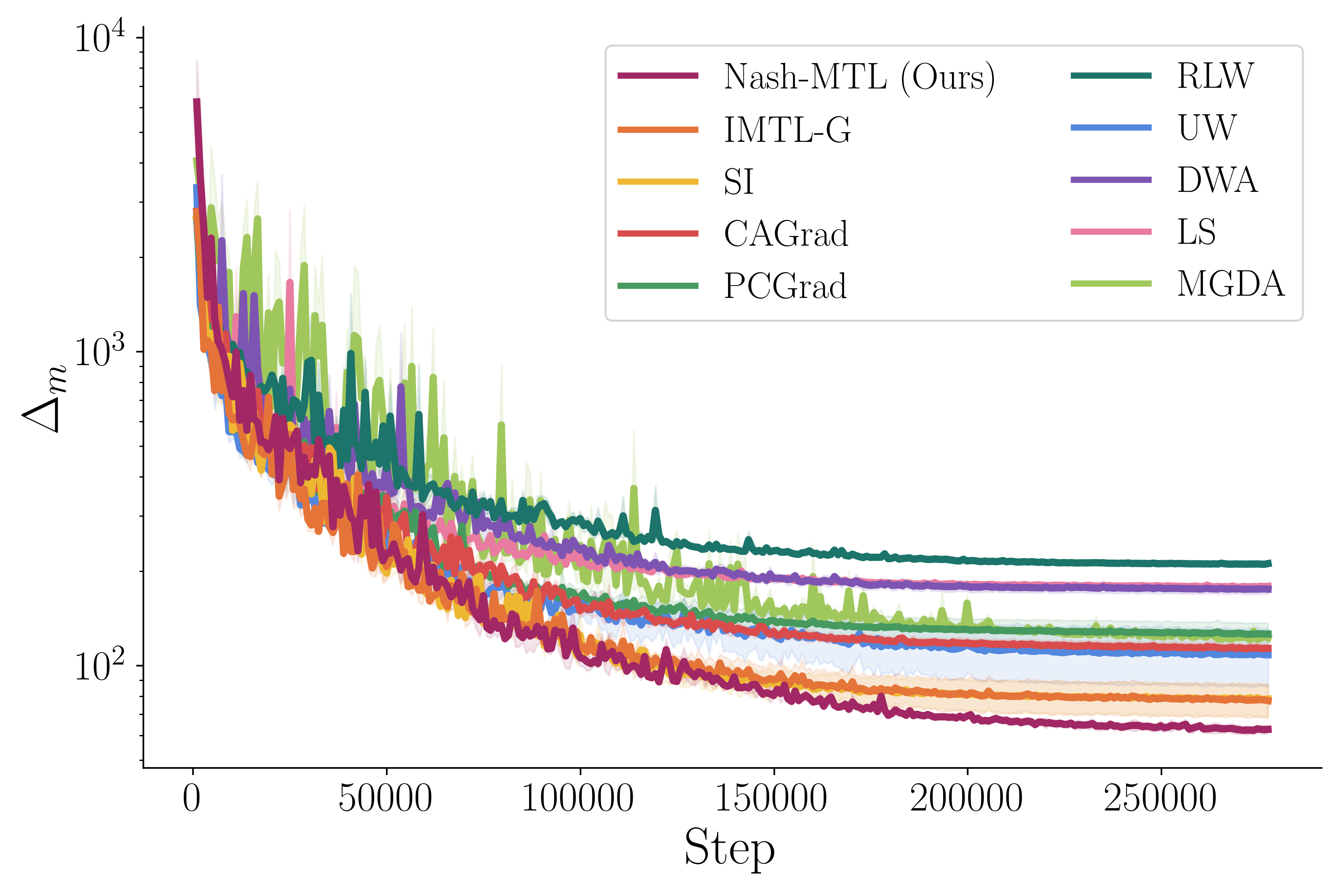}
    \caption{\textit{QM9}. Test $\Delta_m$ throughout the training process averaged over 3 random seeds.}
    \label{fig:qm_9}
\end{figure}
Studies in the literature proposed several explanations for the difficulty in the optimization process of MTL, such as conflicting gradients \cite{wang2020gradient, yu2020gradient}, or plateaus in the loss landscape \cite{schaul2019ray}. Other studies aimed at improving multitask learning by proposing novel architectures \cite{misra2016cross, hashimoto2017joint,Liu2019EndToEndML, Chen2020JustPA}. We focus on weighting the gradients of the tasks via an axiomatic approach that is agnostic to the architecture used. Studies in a similar vein proposed to weigh the task losses with various approaches, such as the uncertainty of the tasks \cite{kendall2018multi}, the norm of the gradients \cite{chen2017gradnorm}, random weights \cite{lin2021closer}, and similarity of the gradients \cite{du2018adapting, suteu2019regularizing}. These methods are mostly heuristic and can have unstable performance \cite{liu2021conflict}. 
% \citet{yu2020gradient} proposed to project the gradients of tasks to the normal plane of other task with conflicting gradients.
Recently, several studies proposed MTL approaches based on the multiple-gradient descent algorithm (MGDA) for multi-objective optimization \cite{desideri2012multiple}. This is an appealing approach since, under mild conditions, convergence to a Pareto stationary point is guaranteed. \citet{sener2018multi} cast the multi-objective problem to multi-task problem and suggest task weighting based on the Frank-Wolfe algorithm \cite{jaggi2013revisiting}. \citet{liu2021conflict} searches for an update direction in a neighborhood of the average gradient that maximizes the worst improvement of any task. Unlike these studies, we propose an MTL approach based on a Bargaining game that can find solutions that are Pareto optimal and proportionally fair.

The closest work to our approach, to the best of our knowledge, is \citet{liu2020towards}. There, the authors propose to look for a fair gradient direction where all the cosine similarities are equal. We note that this update direction satisfies all of the Nash axioms except for Pareto optimally. Thus, unlike our proportionally fair approach, it can settle for a sub-optimal solution for the sake of fairness.

Finally, we note that the Nash bargaining solution was effectively applied to problems in various fields such as communication \cite{zhang2008cooperation, leshem2011smart, shi2018nash}, economics \cite{dagan1993bankruptcy}, and computing \cite{grosu2002load}, and to several learning setups, such as reinforcement learning \cite{qiao2006multi}, Bayesian optimization \cite{binois2020kalai}, clustering \cite{rezaee2021gbk}, federated learning \cite{kim2021cooperative}, and multi-armed bandits \cite{baek2021fair}.

\begin{table}[t]
\small
\centering
\caption{\textit{QM9}. Test performance averaged over 3 random seeds.}
    \vskip 0.11in
\begin{tabular}{@{}clccc@{}}
\toprule
 &  & \textbf{MR} $\downarrow$ & $\mathbf{\Delta_m\%}$ $\downarrow$ \\ 
 \midrule
\multicolumn{2}{c}{LS}   & 6.8 & $177.6 \pm ~~3.4$ \\
\multicolumn{2}{c}{SI}   & 4.0 & $~~77.8 \pm ~~9.2$ \\
\multicolumn{2}{c}{RLW}  & 8.2 & $203.8 \pm ~~3.4$ \\
\multicolumn{2}{c}{DWA}  & 6.4 & $175.3 \pm ~~6.3$ \\
\multicolumn{2}{c}{UW}   & 5.3 & $108.0 \pm 22.5$\\
\multicolumn{2}{c}{MGDA} & 5.9 & $120.5 \pm ~~2.0$\\
\multicolumn{2}{c}{PCGrad} & 5.0 &$125.7 \pm 10.3$ \\
\multicolumn{2}{c}{CAGrad} & 5.7 &$112.8 \pm ~~4.0$ \\ 
\multicolumn{2}{c}{IMTL-G} & 4.7 &$~~77.2 \pm ~~9.3$ \\ 
\midrule
\multicolumn{2}{c}{\ourmethod{}} & $\mathbf{2.5}$ & $\mathbf{~62.0 \pm ~1.4}$ \\ \bottomrule
\end{tabular}
\label{tab:qm9}
\end{table}

\begin{table*}[!t]
\setlength{\tabcolsep}{3pt}
\small
    \centering
    \caption{\textit{NYUv2}. Test performance for three tasks: semantic segmentation, depth estimation, and surface normal. Values are averages over 3 random seeds.}
    \vskip 0.11in
\resizebox{0.95\textwidth}{!}{%
\begin{tabular}{ccccccccccccccccccccc}
\toprule\\
 &  &  & \multicolumn{2}{c}{Segmentation} &  & \multicolumn{2}{c}{Depth} &  & \multicolumn{8}{c}{Surface Normal} &  &  &  \\
 \cmidrule(lr){4-5} \cmidrule(lr){7-8} \cmidrule(lr){10-17}
 &  &  & \multirow{2}{*}{mIoU $\uparrow$} & \multirow{2}{*}{Pix Acc $\uparrow$} &  & \multirow{2}{*}{Abs Err $\downarrow$} & \multirow{2}{*}{Rel Err $\downarrow$} &  & \multicolumn{2}{c}{Angle Distance $\downarrow$} &  & \multicolumn{5}{c}{Within $t^\circ$  $\uparrow$} &  & \textbf{MR} $\downarrow$ & $\mathbf{\Delta m \%} \downarrow$ &  \\
 \cmidrule(lr){10-11} \cmidrule(lr){13-17} \cmidrule(lr){19-19} \cmidrule(lr){20-20}
 &  &  &  &  &  &  &  &  & Mean & Median &  & 11.25 &  & 22.5 &  & 30 &  &  &  \\
 \midrule
 & \multicolumn{2}{c}{STL} & $38.30$ & $63.76$ &  & $0.6754$ & $0.2780$ &  & $25.01$ & $19.21$ &  & $30.14$ &  & $57.20$ &  & $69.15$ &  &  &  \\
  \midrule
 & \multicolumn{2}{c}{LS} & $39.29$ & $65.33$ &  & $0.5493$ & $0.2263$ &  & $28.15$ & $23.96$ &  & $22.09$ &  & $47.50$ &  & $61.08$ && ~~$8.11$ & $5.59$ &  \\
 & \multicolumn{2}{c}{SI} & $38.45$ & $64.27$ && $0.5354$ & $0.2201$ && $27.60$ & $23.37$ && $22.53$ && $48.57$ && $62.32$ && ~~$7.11$ & $4.39$  \\
 & \multicolumn{2}{c}{RLW} & $ 37.17 $ & $ 63.77 $ && $ 0.5759 $ & $ 0.2410 $ && $ 28.27 $ & $ 24.18 $ && $ 22.26 $ && $ 47.05 $ && $ 60.62 $ && $ 10.11 $ & $ 7.78 $  \\
 & \multicolumn{2}{c}{DWA} & $39.11$ & $65.31$ &  & $0.5510$ & $0.2285$ &  & $27.61$ & $23.18$ &  & $24.17$ &  & $50.18$ &  & $62.39$ && $6.88$ & $3.57$ &  \\
 & \multicolumn{2}{c}{UW} & $36.87$ & $63.17$ &  & $0.5446$ & $0.2260$ &  & $27.04$ & $22.61$ &  & $23.54$ &  & $49.05$ &  & $63.65$ && $6.44$ & $4.05$ &  \\
 & \multicolumn{2}{c}{MGDA} & $30.47$ & $59.90$ &  & $0.6070$ & $0.2555$ &  & $\mathbf{24.88}$ & $\mathbf{19.45}$ &  & $\mathbf{29.18}$ &  & $\mathbf{56.88}$ &  & $\mathbf{69.36}$ && $5.44$ & $1.38$ &  \\
 & \multicolumn{2}{c}{PCGrad} & $38.06$ & $64.64$ &  & $0.5550$ & $0.2325$ &  & $27.41$ & $22.80$ &  & $23.86$ &  & $49.83$ &  & $63.14$ && $6.88$ & $3.97$ &  \\
 & \multicolumn{2}{c}{GradDrop} & $39.39$ & $65.12$ &  & $0.5455$ & $0.2279$ &  & $27.48$ & $22.96$ &  & $23.38$ &  & $49.44$ &  & $62.87$ && $6.44$ & $3.58$ &  \\
 & \multicolumn{2}{c}{CAGrad} & $39.79$ & $65.49$ &  & $0.5486$ & $0.2250$ &  & $26.31$ & $21.58$ &  & $25.61$ &  & $52.36$ &  & $65.58$ && $3.77$ & $0.20$ &  \\
 & \multicolumn{2}{c}{IMTL-G} & $39.35$ & $ 65.60$ &  & $0.5426$ & $0.2256$ &  & $26.02$ & $21.19$ &  & ~~$26.2$ &  & $53.13$ &  & $66.24$ && $3.11 $ & $ -0.76 $ &  \\
 \midrule
 & \multicolumn{2}{c}{\ourmethod{}} & $\mathbf{40.13}$ & $\mathbf{65.93}$ &  & $\mathbf{0.5261}$ & $\mathbf{0.2171}$ &  & $25.26$ & $20.08$ &  & $28.4$ &  & $55.47$ &  & $68.15$ && $\mathbf{1.55}$ & $\mathbf{-4.04}$ & \\
 \bottomrule
\end{tabular}%
}
\label{tab:nyu}
\end{table*}

\section{Analysis}
\label{sec:Analysis}

\revision{We now analyze the convergence of our method in the convex and non-convex cases. As even single-task non-convex optimization might only converge to a stationary point, we will prove convergence to a Pareto stationary point, i.e., a point where some convex combination of the gradients is zero. As stated, we also assume that the gradients are independent while not at a Pareto stationary point. Independence of the gradients is a slightly stronger assumption than Pareto stationarity but is needed to exclude degenerate edge cases such as two identical tasks.}

\revision{We note that by substituting local Pareto optimality for Pareto stationarity in Assumption~\ref{assump:independence} we can show convergence to a local Pareto optimal point. However, this assumption has strong implications, as it implies we avoid local maxima and saddle points of any specific task. Since our update rule is a descent direction for all tasks, we can reasonably assume that our algorithm avoids local maxima points. Furthermore, it was shown that first-order methods avoid saddle points \cite{no_saddle_points}, giving credence to this stronger assumption. Nevertheless, we take a conservative approach and state our results with the weaker assumption.}

% We note that if we assume that the gradients are independent for points that are not locally Pareto optimal we could prove convergence to a locally Pareto optimal point. This assumption, however, has strong implications, as it also implies we avoid local maxima and saddle points of any specific task. Because our update rule is a descent direction for all tasks, we can reasonably assume that our algorithm avoids local maxima points. Furthermore, it was shown that first-order methods avoid saddle points \cite{no_saddle_points}, giving credence to this stronger assumption. Despite that we decided to take a conservative approach and state our results with the weaker assumption. 

We formally make the following assumptions:
\begin{assumption}\label{assump:independence}
We assume that for a sequence $\{\theta^{(t)}\}_{t=1}^\infty$ generated by our algorithm, the set of the gradient vectors $g_1^{(t)},...,g_K^{(t)}$ at any point on the sequence and at any partial limit are linearly independent unless that point is a Pareto stationary point. 
\end{assumption}

% Cityscapes table
\setlength{\tabcolsep}{5pt}
\begin{table*}[!t]
    \small
    \centering
    \caption{\textit{CityScapes}. Test performance for two tasks: semantic segmentation and depth estimation. Value are averages over 3 random seeds.}
    \vskip 0.11in
\begin{tabular}{@{}ccccccccc@{}}
\toprule
 & \multicolumn{2}{c}{Segmentation} & \multicolumn{1}{l}{} & \multicolumn{2}{c}{Depth} & \multicolumn{1}{l}{} &\\
%  \multirow{2}{*}{$\Delta_m\%$$\downarrow$} \\
\cmidrule(lr){2-3} \cmidrule(lr){5-6}
 & mIoU $\uparrow$ & \multicolumn{1}{l}{Pix Acc $\uparrow$} & \multicolumn{1}{l}{} & \multicolumn{1}{l}{Abs Err $\downarrow$} & \multicolumn{1}{l}{Rel Err$\downarrow$} & &
 \textbf{MR} $\downarrow$ & $\mathbf{\Delta_m\%}$ $\downarrow$\\
%  \multicolumn{1}{l}{} &  \\ 
 \midrule
STL & 74.01 & 93.16 &  & 0.0125 & 27.77 & \multicolumn{1}{l}{} & \multicolumn{1}{l}{} \\ \midrule
% Cross-Stitch (can remove?) & 73.08 & 92.79 &  & 0.0165 & 118.5 &  & 90.02 \\
LS & 75.18 & 93.49 &  & 0.0155 & 46.77 &  & $6.12$ & 22.60 \\
SI & 70.95 & 91.73 && 0.0161 & 33.83 && $8.00$ & 14.11 \\
RLW & 74.57 & 93.41 && 0.0158 & 47.79 && $9.25$ & 24.38 \\
DWA & 75.24 & 93.52 &  & 0.0160 & 44.37 && $6.00$ & 21.45 \\
UW & 72.02 & 92.85 &  & 0.0140 & \textbf{30.13} && $5.25$ & ~\textbf{~5.89} \\
MGDA & 68.84 & 91.54 &  & 0.0309 & 33.50 && $8.75$ & 44.14 \\
PCGrad & 75.13 & 93.48 &  & 0.0154 & 42.07 && $6.37$ & 18.29 \\
GradDrop & 75.27 & 93.53 &  & 0.0157 & 47.54 && $5.50$ & 23.73 \\
CAGrad & 75.16 & 93.48 &  & 0.0141 & 37.60 && $5.37$  & 11.64 \\ 
IMTL-G & 75.33 & 93.49 &  & 0.0135 & 38.41 && $3.62$  & 11.10 \\
\midrule
\ourmethod{} & \textbf{75.41} & \textbf{93.66} & \textbf{} & \textbf{0.0129} & 35.02 && $\mathbf{1.75}$ & 6.82 \\ \bottomrule
\end{tabular}
\label{tab:cityscapes}
\end{table*}

\begin{assumption}\label{assump:diff}
We assume that all loss functions are differentiable, bounded below and that all sub-level sets are bounded. The input domain is open and convex.
\end{assumption}

\begin{assumption}\label{assump:L-smooth}
We assume that all the loss functions are L-smooth,
\begin{equation}
    ||\nabla\ell_i(x)-\nabla\ell_i(y)||\leq L||x-y|| \quad .
\end{equation}
\end{assumption}

\begin{theorem}\label{trm:nonconvex}
Let $\{\theta^{(t)}\}_{t=1}^\infty$ be the sequence generated by the update rule $\theta^{(t+1)}=\theta^{(t)}-\mu^{(t)}\Dt^{(t)}$ where $\Dt^{(t)}=\sum_{i=1}^K\alpha^{(t)}_ig_i^{(t)}$ is the Nash bargaining solution $(G^{(t)})\T G^{(t)}\alpha^{(t)}=1/\alpha^{(t)}$. Set $\mu^{(t)}=\min\limits_{i\in[K]}\frac{1}{LK\alpha^{(t)}_i}$. Then, the sequence $\{\theta^{(t)}\}_{t=1}^\infty$ has a subsequence that converges to a Pareto stationary point $\theta^*$. Moreover all the loss functions $(\ell_1(\theta^{(t)}),...,\ell_K(\theta^{(t)}))$ converge to $(\ell_1(\theta^*),...,\ell_K(\theta^*))$.
\end{theorem}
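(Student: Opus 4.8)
The plan is to exploit two structural properties of the Nash bargaining update (from the earlier claim characterizing the solution of $G^\top G\alpha = 1/\alpha$, taking the normalization $\lambda = 1$): for every task $i$ we have $g_i^{(t)\top}\Dt^{(t)} = 1/\alpha_i^{(t)} > 0$, so $\Dt^{(t)}$ is a common strict descent direction, and $\|\Dt^{(t)}\|^2 = \Dt^{(t)\top}G^{(t)}\alpha^{(t)} = \sum_i \alpha_i^{(t)}\big(g_i^{(t)\top}\Dt^{(t)}\big) = \sum_i \alpha_i^{(t)}/\alpha_i^{(t)} = K$. (If some $\theta^{(t)}$ is already Pareto stationary we are trivially done, so we may assume the iterates are well-defined and, by Assumption~\ref{assump:independence}, the gradients are linearly independent at every step.)

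First I would invoke the descent lemma from $L$-smoothness (Assumption~\ref{assump:L-smooth}): $\ell_i(\theta^{(t+1)}) \le \ell_i(\theta^{(t)}) - \mu^{(t)}g_i^{(t)\top}\Dt^{(t)} + \tfrac{L}{2}(\mu^{(t)})^2\|\Dt^{(t)}\|^2 = \ell_i(\theta^{(t)}) - \mu^{(t)}/\alpha_i^{(t)} + \tfrac{LK}{2}(\mu^{(t)})^2$. Because $\mu^{(t)} = \min_j \frac{1}{LK\alpha_j^{(t)}} \le \frac{1}{LK\alpha_i^{(t)}}$, the quadratic term is at most $\tfrac{\mu^{(t)}}{2\alpha_i^{(t)}}$, so $\ell_i(\theta^{(t+1)}) \le \ell_i(\theta^{(t)}) - \tfrac{\mu^{(t)}}{2\alpha_i^{(t)}}$ for every $i$. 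Hence each $\ell_i(\theta^{(t)})$ is non-increasing; being bounded below (Assumption~\ref{assump:diff}) it converges, and telescoping gives $\sum_t \mu^{(t)}/\alpha_i^{(t)} < \infty$, so $\mu^{(t)}/\alpha_i^{(t)} \to 0$. Writing $\mu^{(t)} = 1/(LK\|\alpha^{(t)}\|_\infty)$, this reads $\frac{1}{LK\,\alpha_i^{(t)}\|\alpha^{(t)}\|_\infty} \to 0$, and since $\alpha_i^{(t)} \le \|\alpha^{(t)}\|_\infty$ we conclude $\|\alpha^{(t)}\|_\infty \to \infty$.

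Next, monotonicity of $\ell_1$ confines $\{\theta^{(t)}\}$ to the sublevel set $\{\ell_1 \le \ell_1(\theta^{(1)})\}$, which is bounded (Assumption~\ref{assump:diff}), so some subsequence $\theta^{(t_k)} \to \theta^*$; by continuity of the gradients, $g_i^{(t_k)} \to g_i(\theta^*) =: g_i^*$. Passing to a further subsequence, the normalized weights $\hat\alpha^{(t)} := \alpha^{(t)}/\|\alpha^{(t)}\|_1$, which lie in the compact probability simplex, converge to some $\hat\alpha^* \ge 0$ with $\sum_i \hat\alpha_i^* = 1$. Since $G^{(t)}\hat\alpha^{(t)} = \Dt^{(t)}/\|\alpha^{(t)}\|_1$ with $\|\Dt^{(t)}\| = \sqrt K$ and $\|\alpha^{(t)}\|_1 \ge \|\alpha^{(t)}\|_\infty \to \infty$, we get $G^{(t_k)}\hat\alpha^{(t_k)} \to 0$, and taking limits yields $\sum_i \hat\alpha_i^* g_i^* = 0$ with $\hat\alpha^*$ a nonzero convex combination — i.e., $\theta^*$ is Pareto stationary. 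For the last assertion, each $\ell_i(\theta^{(t)})$ already converges (monotone, bounded below) and $\ell_i(\theta^{(t_k)}) \to \ell_i(\theta^*)$ by continuity, so the whole sequence of loss vectors converges to $(\ell_1(\theta^*),\dots,\ell_K(\theta^*))$.

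The step I expect to require the most care is the passage from the coefficient blow-up $\|\alpha^{(t)}\|_\infty \to \infty$ to Pareto stationarity: one must normalize the weights into the simplex and use the exact identity $\|\Dt^{(t)}\|^2 = K$ so that the (unbounded) update becomes, after rescaling, a vanishing convex combination of the limiting gradients. The remaining ingredients are routine descent-lemma bookkeeping, together with care in tracking which statements hold for the full sequence (monotone loss decrease, $\|\alpha^{(t)}\|_\infty \to \infty$, convergence of the loss values) versus only along the extracted subsequence ($\theta^{(t_k)} \to \theta^*$ and $\hat\alpha^{(t_k)} \to \hat\alpha^*$).
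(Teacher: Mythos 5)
Your proof is correct, and its first half coincides with the paper's: the identities $g_i^{(t)\top}\Dt^{(t)}=1/\alpha_i^{(t)}$ and $\|\Dt^{(t)}\|^2=K$, the descent-lemma bound $\ell_i(\theta^{(t+1)})\le\ell_i(\theta^{(t)})-\mu^{(t)}/(2\alpha_i^{(t)})$, telescoping, blow-up of $\|\alpha^{(t)}\|$, and confinement of the iterates to a bounded sublevel set (the paper telescopes the averaged loss $\tfrac1K\sum_i\ell_i$ to get $\sum_t(\mu^{(t)})^2<\infty$, you telescope each $\ell_i$ separately; these are interchangeable). Where you genuinely diverge is the endgame. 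The paper shows $\|1/\alpha^{(t)}\|$ stays bounded (via $|1/\alpha_i^{(t)}|=|g_i^{(t)\top}\Dt^{(t)}|\le\sqrt K\,\|g_i^{(t)}\|$ and bounded gradients on the compact sublevel set), deduces from $(G^{(t)})\T G^{(t)}\alpha^{(t)}=1/\alpha^{(t)}$ that the smallest singular value $\sigma_K((G^{(t)})\T G^{(t)})\to0$, passes to the limit to get linearly \emph{dependent} gradients at $\theta^*$, and only then invokes Assumption~\ref{assump:independence} (its ``at any partial limit'' clause) to convert dependence into Pareto stationarity. You instead normalize $\hat\alpha^{(t)}=\alpha^{(t)}/\|\alpha^{(t)}\|_1$ into the compact simplex and use $G^{(t)}\hat\alpha^{(t)}=\Dt^{(t)}/\|\alpha^{(t)}\|_1\to0$ to exhibit, in the limit, a nonzero convex combination of the gradients at $\theta^*$ equal to zero, which is exactly the paper's definition of Pareto stationarity. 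Your route is slightly leaner: it needs neither the bound on $\|1/\alpha^{(t)}\|$ nor the partial-limit clause of Assumption~\ref{assump:independence} (you use the assumption only to keep the Nash solution well defined along the sequence), whereas the paper's singular-value formulation makes the degeneracy of $G\T G$ explicit, which is what its empirical check of $\sigma_K(G\T G)$ monitors. Both treatments share the same mild gloss in treating the bounded sublevel set as compact although the domain is only assumed open, so the limit point's membership in the domain is taken for granted.
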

%\todo{Explain what the thorem teaches us.}

\begin{proofsketch} We can show that $\mu^{(t)}=\min_i\frac{1}{\alpha^{(t)}_i}\rightarrow0$ so $||\alpha^{(t)}||\rightarrow\infty$. We also show that  $||1/\alpha^{(t)}||$ is bounded. As $(G^{(t)})\T G^{(t)}\alpha^{(t)}=1/\alpha^{(t)}$ this means that the smallest singluar value of $(G^{(t)})\T G^{(t)}$ must converge to zero. From compactness  $\{\theta^{(t)}\}_{t=1}^\infty$ has a converging subsequence whose limit we denote as $\theta^*$. From continuity we get that the gradients Gram matrix $G\T G$ computed at $\theta^*$ must have a zero singular value and therefore the gradients are linearly dependent. From our assumption this means that $\theta^*$ is Pareto stationary. As the losses are monotonically decearsing and bounded below they must converge and to the subsequence limit of $(\ell_1(\theta^*),...,\ell_K(\theta^*))$.
\end{proofsketch}
If we also assume convexity, we can strengthen our claim
\begin{theorem}
Let $\{\theta^{(t)}\}_{t=1}^\infty$ be the sequence generated by the update rule $\theta^{(t+1)}=\theta^{(t)}-\mu^{(t)}\Dt^{(t)}$ where $\Dt^{(t)}=\sum_{i=1}^K\alpha^{(t)}_ig_i^{(t)}$ is the Nash bargaining solution $(G^{(t)})\T G^{(t)}\alpha^{(t)}=1/\alpha^{(t)}$. Set $\mu^{(t)}=\min\limits_{i\in[K]}\frac{1}{LK\alpha^{(t)}_i}$. If we assume that all the loss functions are convex, then the sequence $\{\theta^{(t)}\}_{t=1}^\infty$ converges to a Pareto optimal point $\theta^*$.
\end{theorem}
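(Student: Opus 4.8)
The plan is to start from Theorem~\ref{trm:nonconvex}, which already hands us a subsequence $\theta^{(t_k)}\to\theta^*$ with $\theta^*$ Pareto stationary and $\ell_i(\theta^{(t)})\downarrow\ell_i(\theta^*)$ for every $i$, and then (i) use convexity to run a quasi-Fej\'er argument that upgrades this to convergence of the \emph{entire} sequence, and (ii) use convexity again to promote ``Pareto stationary'' to ``Pareto optimal''. Two ingredients feed step (i). First, since $(G^{(t)})\T G^{(t)}\alpha^{(t)}=1/\alpha^{(t)}$ we have $g_i^{(t)\T}\Dt^{(t)}=1/\alpha_i^{(t)}$ for all $i$, whence the identity $\|\Dt^{(t)}\|^2=\sum_i\alpha_i^{(t)}\,g_i^{(t)\T}\Dt^{(t)}=K$; in particular $1/\alpha_i^{(t)}\le\sqrt{K}\,\|g_i^{(t)}\|$, and since the trajectory stays in a bounded sublevel set (Assumption~\ref{assump:diff}) the gradients, hence the $1/\alpha_i^{(t)}$, are uniformly bounded, so the $\alpha_i^{(t)}$ are bounded away from $0$. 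Second, inserting the update into the $L$-smoothness descent inequality (Assumption~\ref{assump:L-smooth}) and using $\mu^{(t)}\le 1/(LK\alpha_i^{(t)})$ gives $\ell_i(\theta^{(t+1)})\le\ell_i(\theta^{(t)})-\mu^{(t)}/(2\alpha_i^{(t)})$; summing and using boundedness below yields $\sum_t\mu^{(t)}/\alpha_i^{(t)}<\infty$ for every $i$, and since $(\mu^{(t)})^2=\tfrac1{LK}\mu^{(t)}\min_i(1/\alpha_i^{(t)})\le\tfrac1{LK}\sum_i\mu^{(t)}/\alpha_i^{(t)}$, also $\sum_t(\mu^{(t)})^2<\infty$.

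For step (i), expand $\|\theta^{(t+1)}-\theta^*\|^2=\|\theta^{(t)}-\theta^*\|^2-2\mu^{(t)}(\theta^{(t)}-\theta^*)\T\Dt^{(t)}+(\mu^{(t)})^2\|\Dt^{(t)}\|^2$. By convexity $g_i^{(t)\T}(\theta^{(t)}-\theta^*)\ge\ell_i(\theta^{(t)})-\ell_i(\theta^*)\ge 0$ (the second inequality because the losses decrease monotonically to their limit values at $\theta^*$), so $(\theta^{(t)}-\theta^*)\T\Dt^{(t)}=\sum_i\alpha_i^{(t)}g_i^{(t)\T}(\theta^{(t)}-\theta^*)\ge 0$, and therefore $\|\theta^{(t+1)}-\theta^*\|^2\le\|\theta^{(t)}-\theta^*\|^2+K(\mu^{(t)})^2$ with a summable perturbation. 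A standard argument then shows $\|\theta^{(t)}-\theta^*\|^2$ converges, and since the subsequence from Theorem~\ref{trm:nonconvex} drives it to $0$, the whole sequence converges to $\theta^*$.

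For step (ii), Pareto stationarity of $\theta^*$ furnishes $\lambda\ge 0$ with $\sum_i\lambda_i=1$ and $\sum_i\lambda_ig_i(\theta^*)=0$, so the convex function $\sum_i\lambda_i\ell_i$ has vanishing gradient at $\theta^*$ and is globally minimized there; consequently $\theta^*$ admits no simultaneous strict improvement on all tasks. Promoting this to full Pareto optimality — excluding a point $x^*$ with $\ell_i(x^*)\le\ell_i(\theta^*)$ for all $i$ and $\ell_j(x^*)<\ell_j(\theta^*)$ for some $j$ — is the step I expect to be the main obstacle. The natural approach is to rerun the expansion of step (i) centered at $x^*$ (legitimate since $\ell_i(\theta^{(t)})\ge\ell_i(\theta^*)\ge\ell_i(x^*)$), which yields $\sum_t\mu^{(t)}\alpha_j^{(t)}<\infty$, and then to combine this with the identity $\|\Dt^{(t)}\|^2=K$, the step-size formula $\mu^{(t)}=1/(LK\max_i\alpha_i^{(t)})$, the relation $g_j^{(t)\T}\Dt^{(t)}=1/\alpha_j^{(t)}$, and a compactness argument along $\theta^{(t)}\to\theta^*$ to reach a contradiction; this is where Assumption~\ref{assump:independence} and the structure of the Nash weights have to be used carefully, since otherwise one only recovers weak Pareto optimality. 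By contrast, the full-sequence convergence in step (i) is routine once the identity $\|\Dt^{(t)}\|^2=K$ and the summability $\sum_t(\mu^{(t)})^2<\infty$ are established.
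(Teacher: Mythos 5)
Your step (i) is essentially the paper's own argument and is fine. The paper also expands $\|\theta^{(t+1)}-\theta^*\|^2$ around the Pareto stationary partial limit $\theta^*$ supplied by Theorem~\ref{trm:nonconvex}, uses convexity via $g_i^{(t)\T}(\theta^{(t)}-\theta^*)\ge \ell_i(\theta^{(t)})-\ell_i(\theta^*)$, and then, instead of keeping the perturbation term, combines $\ell_i(\theta^*)\le\ell_i(\theta^{(t+1)})$ with the per-task descent bound $\ell_i(\theta^{(t+1)})\le\ell_i(\theta^{(t)})-\mu^{(t)}/(2\alpha_i^{(t)})$ to cancel the $K(\mu^{(t)})^2$ term exactly, obtaining genuine Fej\'er monotonicity $\|\theta^{(t+1)}-\theta^*\|\le\|\theta^{(t)}-\theta^*\|$. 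Your quasi-Fej\'er variant, which keeps the $K(\mu^{(t)})^2$ perturbation and invokes the summability $\sum_t(\mu^{(t)})^2<\infty$ already established in the nonconvex proof, reaches the same conclusion and is equally valid; this part of your proposal matches the paper up to that cosmetic difference.

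The gap is step (ii): you never actually prove that $\theta^*$ is Pareto optimal. What you do establish (that $\theta^*$ minimizes $\sum_i\lambda_i\ell_i$ for the stationarity weights $\lambda\ge0$) gives only weak Pareto optimality, and the contradiction argument you sketch --- recentering the Fej\'er expansion at a hypothetical dominating point $x^*$ to obtain $\sum_t\mu^{(t)}\alpha_j^{(t)}<\infty$ --- is left unfinished: as you essentially concede, $\mu^{(t)}\alpha_j^{(t)}=\alpha_j^{(t)}/(LK\max_i\alpha_i^{(t)})\to0$ is not by itself a contradiction, since $\alpha_j^{(t)}$ may simply stay bounded while $\|\alpha^{(t)}\|\to\infty$, so the proposal stops short of the theorem's conclusion. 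For comparison, the paper dispatches this step in a single sentence: once the whole sequence is shown to converge to the Pareto stationary point $\theta^*$, convexity is invoked to declare $\theta^*$ Pareto optimal ``as the optimizer of the convex combination of losses,'' with no further argument. Your hesitation is in fact mathematically reasonable --- minimizing a convex combination whose weights may vanish on some coordinates only guarantees weak Pareto optimality, so the paper's one-liner implicitly relies on strictly positive limiting weights or some similar nondegeneracy --- but judged as a proof of the stated theorem, your attempt is missing its final step at exactly the point the paper treats as immediate.
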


See the full proofs in the appendix Sec. \ref{sec:Appenix_proofs}.

\section{Experiments}
\label{sec:Exp}

We evaluate \ourmethod{} on diverse multi-task learning problems. The experiments show the superiority of \ourmethod{} over previous MTL methods. To support future research and the reproducibility of the results, we will make our source code publicly available. Additional experimental results and details are provided in Appendix~\ref{sec:Appenix_exp_details}. 

\textbf{Compared methods:} We compare the following approaches: \textbf{(1)} Our proposed \ourmethod{} algorithm described in Section~\ref{sec:Method}; 
\textbf{(2)} Single task learning (STL), training an independent model for each task;
\textbf{(3)} Linear scalarization (LS) baseline which minimizes $\sum_k \ell_k$; 
\textbf{(4)} Scale-invariant (SI) baseline which minimizes $\sum_k \log\ell_k$. This baseline is invariant to rescaling each loss with a positive number;
\textbf{(5)} Dynamic Weight Average (DWA)~\cite{Liu2019EndToEndML} adjusts task weights based on the rates of loss changes over time; 
\textbf{(6)} Uncertainty weighting (UW)~\cite{kendall2018multi} uses task uncertainty quantification to adjust task weights;
\textbf{(7)} MGDA~\cite{sener2018multi} finds a convex combination of gradients with a minimal norm;
\textbf{(8)} Random loss weighting (RLW) with normal distribution, scales the losses according to randomly sampled task weights~\cite{lin2021closer}; \textbf{(9)} PCGrad~\cite{yu2020gradient} removes conflicting components of each gradient w.r.t the other gradients; 
\textbf{(10)} GradDrop~\cite{Chen2020JustPA} randomly drops components of the task gradients based on how much they conflict; 
\textbf{(11)} CAGrad~\cite{liu2021conflict} optimizes for the average loss while explicitly controlling the minimum decrease rate across tasks;
\textbf{(12)} IMTL-G~\cite{liu2020towards} uses an update direction with equal projections on task gradients. IMTL-G is applied to the feature-level gradients, as was suggested by the authors. We also tried applying IMTL-G to the shared-parameters gradient for a fair comparison, but its {performance was even worse}. 
%did not perform as well.

\textbf{Evaluation.} For each experiment, we report the common evaluation metrics for each task. Since naturally MTL does not carry a single objective and since the scale of per-task metrics often varies significantly, we report two metrics that capture the overall performance: \textbf{(1)} $\mathbf{\Delta_m\%}$, the average per-task performance drop of method $m$ relative to the STL baseline denoted $b$. Formally,  $\Delta_m\%=\frac{1}{K}\sum_{k=1}^K (-1)^{\delta_k} (M_{m,k}-M_{b,k}) / M_{b,k}$,
where $M_{b,k}$ is the value of metric $M_k$ obtained by the baseline and $M_{m,k}$ by the compared method. $\delta_k=1$ if a higher value is better for a metric $M_k$ and 0 otherwise~\cite{maninis2019attentive, liu2021conflict}.
%\gal{define $M_{b,k}$} where $\delta_k=1$ if a higher value is better for a criterion $M_k$ and 0 otherwise~\cite{maninis2019attentive, liu2021conflict}.
\textbf{(2) Mean Rank (MR):} The average rank of each method across the different tasks (lower is better). A method receives the best value, $\text{MR}=1$, if it ranks first in all tasks.

\subsection{Multi-Task Regression for QM9}
\label{sec:qm9}

We evaluate \ourmethod{} on predicting 11 properties of molecules from the QM9 dataset~\cite{ramakrishnan2014quantum}, a widely used benchmark for graph neural networks. QM9 consists of $\sim 130K$ molecules represented as graphs annotated with both node and edge features. We used the QM9 example in PyTorch Geometric~\cite{Fey/Lenssen/2019}, and use 110K molecules for training, 10K for validation, and 10K as a test set. As each task target range is at a different scale, this could be an issue for other methods that are not scale-invariant like ours. For fairness, we normalized each task target to have zero mean and unit standard deviation. We use the popular GNN model from \citet{gilmer2017neural}, a network comprised of several concatenated message passing layers, which update the node features based on both node and edge features, followed by the pooling operator from \citet{vinyals2015order}. Specifically, we used the implementation from \citet{Fey/Lenssen/2019}. We train each method for $300$ epochs and search for the best learning-rate (lr) given by the $\Delta_m$ performance on the validation set. We use a learning-rate scheduler to reduce the lr once the validation $\Delta_m$ metric has stopped improving. The validation set is also used for early stopping.

Predicting molecular properties in QM9 poses a significant  challenge for MTL methods because the number of tasks is large and because the loss scales vary significantly. The scale issue is only partially resolved by normalization because some tasks are easier to learn than others. Prior work found that single-task learning significantly improves performance on all targets compared to MTL methods~\cite{maron2019provably,Klicpera2020DirectionalMP}.

Results are shown in Figure~\ref{fig:qm_9} and Table~\ref{tab:qm9}. \ourmethod{} achieves the best performance in terms of both MR and $\Delta_m$. Interestingly, most MTL methods fall short compared to the simple scale-invariant baseline, which ignores gradient interaction, except for IMTL-G whose performance is on par with this baseline. This result shows that the scale-invariant property of our approach can be beneficial. See Appendix~\ref{sec:Appendix_qm9} for the per-task evaluation results. %\ia{Why you don't refer to the appendix for the full results?}.

\subsection{Scene Understanding}
%Image segmentation and depth estimation are key problems in computer vision. with various applications \citep{Minaee2020ImageSU, Bhoi2019MonocularDE}.
We follow the protocol of~\cite{Liu2019EndToEndML} and evaluate \ourmethod{} on the NYUv2 and Cityscapes datasets \citep{silberman2012indoor, Cordts2016Cityscapes}. NYUv2 is an indoor scene dataset that consists of 1449 RGBD images and dense per-pixel labeling with 13 classes. We use this dataset as a multitask learning benchmark for semantic segmentation, depth estimation, and surface normal prediction. 

The CityScapes dataset~\cite{Cordts2016Cityscapes} contains 5000 high-resolution street-view images with dense per-pixel annotations. We use this dataset as a multitask learning benchmark for semantic segmentation and depth estimation. To speed up the training phase, all images were resized to $128\times 256$. The original dataset contains 19 categories for pixel-wise semantic segmentation, together with ground-truth depth maps. For segmentation, we used a coarser version of the labels with 7 classes.

For all MTL methods, we train a Multi-Task Attention Network (MTAN)~\citep{Liu2019EndToEndML} %with VGG-16 \citep{Simonyan2015VeryDC} backbone.
which adds an attention mechanism on top of the SegNet architecture~\cite{badrinarayanan2017segnet}.
We follow the training procedure from~\citet{Liu2019EndToEndML,yu2020gradient,liu2021conflict}. Each method is trained for $200$ epochs with the Adam optimizer~\citep{Kingma2015AdamAM} and an initial learning-rate of $1e-4$. The learning-rate is halved to $5e-5$ after $100$ epochs. As in~\cite{liu2021conflict} The STL baseline refers to training task-specific SegNet models.

The results are presented in Table~\ref{tab:nyu} and Table~\ref{tab:cityscapes}. Our method, Nash-MTL, achieves the best MR in both datasets, the best $\Delta_m$ in NYUv2 and the seconds to best $\Delta_m$ in the CityScapes experiment. \ourmethod{} performance is well balanced across tasks. MGDA is primarily focused on the task of predicting surface normals and achieves poor performance on the other two tasks. The inherent biasedness of MGDA towards the task with the smallest gradient magnitude was previously discussed in \citet{liu2020towards}. \revision{We note that the optimal solution under \ourmethod{} for the two tasks case is equivalent to independently normalizing each gradient and summing with equal weights. While this is a fairly simple approach for MTL, we show that it outperforms almost all the compared MTL methods on the two-tasks CityScapes benchmark.}

\subsection{Multi-Task Reinforcement Learning}
\label{sec:mt10}
We consider a multi-task RL problem and evaluate \ourmethod{} on the MT10 environment from the Meta-World benchmark~\cite{yu2020meta}. This benchmark involves a simulated robot trained to perform actions like pressing a button and opening a window, each action treated as a task, for a total of 10 tasks. The goal is to learn a policy that can succeed across all the diverse sets of manipulation tasks.
\begin{table}[t]
\centering
\small
% \vspace{-0.25in}
\caption{\textit{MT10}. Average success over 10 random seeds.}
\vskip 0.11in

\begin{tabular}{@{}cccc@{}}
\toprule
                &                & Success $\pm$ SEM          \\ \midrule
\multicolumn{2}{c}{STL SAC}      & $0.90 \pm 0.032$ \\
\midrule
\multicolumn{2}{c}{MTL SAC}      & $0.49 \pm 0.073$ \\
\multicolumn{2}{c}{MTL SAC + TE} & $0.54 \pm 0.047$ \\
\multicolumn{2}{c}{MH SAC}       & $0.61 \pm 0.036$ \\
\multicolumn{2}{c}{SM}           & $0.73 \pm 0.043$ \\
\multicolumn{2}{c}{CARE}         & $0.84 \pm 0.051$ \\
\multicolumn{2}{c}{PCGrad}       & $0.72 \pm 0.022$ \\
\multicolumn{2}{c}{CAGrad}       & $0.83 \pm 0.045$ \\
\midrule
\multicolumn{2}{c}{\ourmethod{}}         & $\mathbf{0.91 \pm 0.031}$ \\
\bottomrule
\end{tabular}
\label{tab:mt10}
\end{table}
Following previous works on MTL-RL~\cite{yu2020gradient, liu2021conflict,sodhani2021multi}, we use Soft Actor-Critic (SAC)~\cite{haarnoja2018soft} as the base RL algorithm. Along with the MTL methods (1) CAGrad~\cite{liu2021conflict} and (2) PCGrad~\cite{yu2020gradient} applied to a shared model SAC, we evaluate the following methods: (3) STL, one SAC model per task; (4) MTL SAC with a shared model; (5) Multi-task SAC with task encoder (MTL SAC + TE, ~\citet{yu2020meta}); (6) Multi-headed SAC (MH SAC) with task-specific heads~\cite{yu2020meta}; (7) Soft Modularization~(SM, \citet{Yang2020MultiTaskRL}) which estimates per-task routes for different tasks in a shared model, and; (8) CARE~\cite{sodhani2021multi} which utilizes language metadata and employs a mixture of encoders. We follow the same experiment setup
from~\citet{sodhani2021multi,liu2021conflict} to train all methods over 2 million steps and report the mean success over $10$ random seeds with fixed evaluation frequency. The results are presented in Table~\ref{tab:mt10}.

\ourmethod{} achieves the best performance by a large margin. In addition, \ourmethod{} is the only MTL method to reach the same performance as the per-task SAC STL baseline.

\subsection{Scaling-up Nash-MTL}
\label{sec:scaling_up_nash_mtl}

\revision{One of the major drawbacks of the SOTA MTL methods is that they require access to all task gradients to compute the optimal update direction~\cite{sener2018multi,yu2020gradient,liu2020towards,liu2021conflict}. This requires one to perform $K$ backward passes at each optimization step, thus scales poorly with the number of tasks. Previous works suggested using a subset of tasks~\cite{liu2021conflict} or replacing the task gradients with the feature-level gradient~\cite{sener2018multi,liu2020towards,javaloy2021rotograd} as potential speedups. 
In our experiments, we found that using the feature-level gradients can greatly reduce \ourmethod{} performance (Appendix~\ref{sec:Appendix_speedup_feature_level}). However, here we show that the simple solution of updating task weights less frequently maintains good performance while dramatically reducing the training time.}

\begin{table}
	    \vspace{-7pt}
		\caption{Training runtime per episode and 
		average success for the MT10 benchmark, computed over $10$ random seeds while varying the frequency of task weights updates in \ourmethod{}.}
		\label{tab:mt10_speedup}
		\centering
		\vspace{10pt}
		\begin{tabular}{@{}cccccc@{}}
        \toprule
                &                & Success $\pm$ SEM  & Runtime[Sec.]        \\ \midrule
        \multicolumn{2}{c}{MTL-SAC}      & $0.49 \pm 0.073$ & 7.3 \\
        \multicolumn{2}{c}{PCGrad}            & $0.72 \pm 0.022$ & 9.7\\
        \multicolumn{2}{c}{CAGrad}            & $0.83 \pm 0.045$ & 20.9\\
        \midrule
        \multicolumn{2}{c}{\ourmethod{}}      & $0.91 \pm 0.031$ & 40.7 \\
        \multicolumn{2}{c}{\ourmethod{}-50}      & $0.85 \pm 0.022$ & 8.6 \\
        \multicolumn{2}{c}{\ourmethod{}-100}      & $0.87 \pm 0.033$ & 7.9 \\
        \bottomrule
        \end{tabular}

\end{table}

\revision{One approach to alleviate this issue is to update the task weights less frequently, and use these weights in subsequent steps. We evaluate this approach using the QM9 dataset and the MT10 benchmark and present the result in Figure~\ref{fig:speedup_ablation} and Table~\ref{tab:mt10_speedup}. We denote \ourmethod{} with task weight update every $T$ optimization steps with \ourmethod{}-$T$.}

\revision{The results show that \ourmethod{} is fairly robust to varying intervals between weights updates. While this simple approach results in a small degradation in performance, it can dramatically decrease the training time of our method. For example, on the QM9, updating the weights every $5$/$50$ steps results in a $\times3.7/9.8$ speedup w.r.t updating the weights at each step. On the MT10 environment, updating the weights every $100$ steps result in $\sim\times 10$ speedup (only $\sim\times1.1$ slower than the fastest baseline) while outperforming all other MTL baseline method (Table~\ref{tab:mt10_speedup}).}

\begin{figure}
    \centering
		\vspace{0.in}
		\includegraphics[width=1.\linewidth]{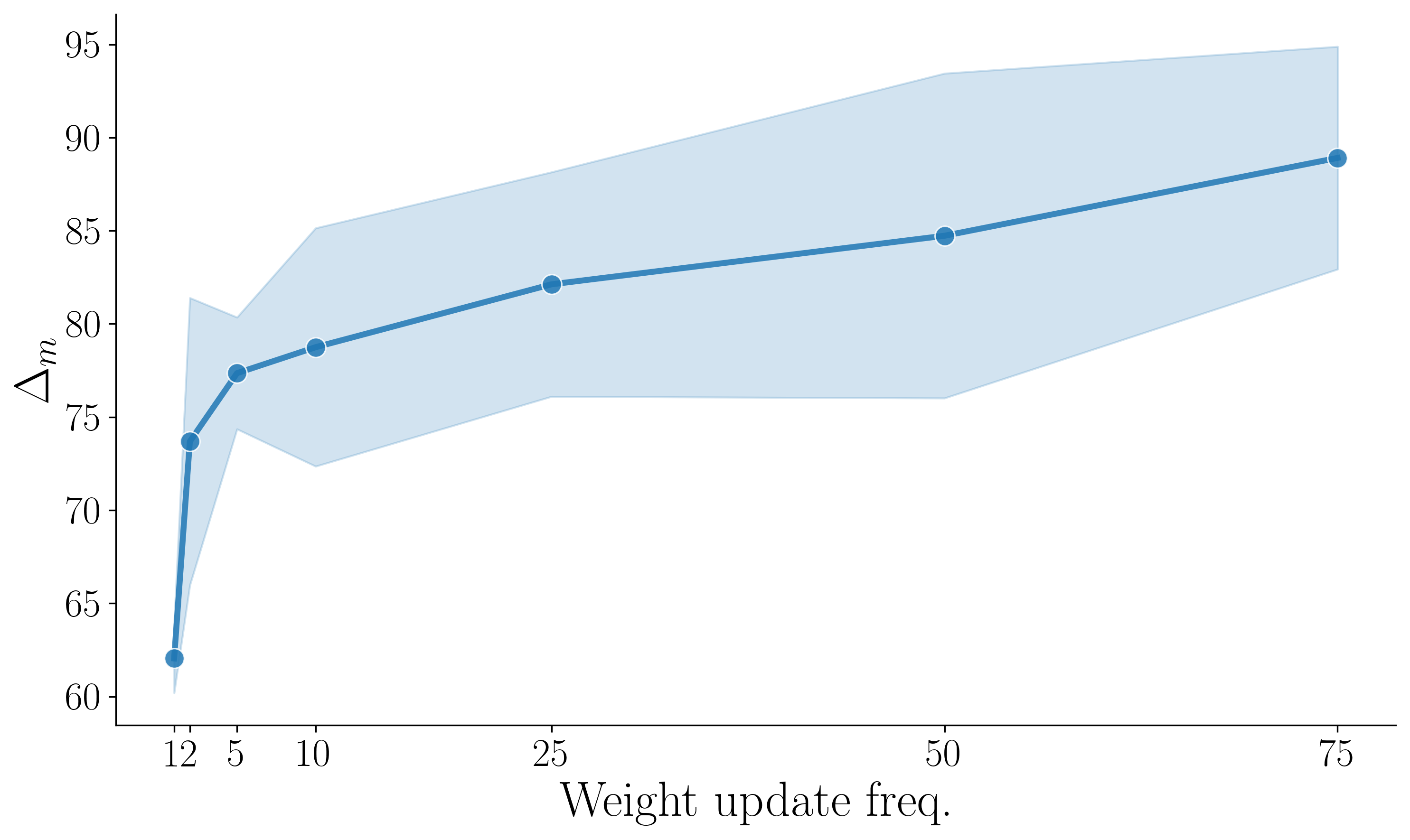}
		\vspace{-20pt}
		\captionof{figure}{Test $\Delta_m$ for the QM9 dataset, averaged over 3 random seeds, for different intervals of task weights update.}
		\label{fig:speedup_ablation}

\end{figure}

\section{Conclusion}

In this work, we present Nash-MTL, a novel and principled approach for multitask learning. We frame the gradient combination step in MTL as a bargaining game and use the Nash bargaining solution to find the optimal update direction. We highlight the importance of the scale invariance approach for multitask learning, specifically for setups with varying loss scales and gradient magnitudes. We provide a theoretical convergence analysis for \ourmethod{}, \revision{showing that it converges to a Pareto optimal and Pareto stationary points in the convex and non-convex settings, respectively.} Finally, our experiments show that \ourmethod{} achieves state-of-the-art results on various benchmarks across multiple domains.

\section{Acknowledgements}

This work was funded by the Israeli innovation authority through the AVATAR consortium; by the Israel
Science Foundation (ISF grant 737/2018); and by an equipment grant to GC and Bar Ilan University (ISF grant 2332/18).

% In the unusual situation where you want a paper to appear in the
% references without citing it in the main text, use \nocite
%\nocite{langley00}

\bibliography{ref}
\bibliographystyle{icml2022}

%%%%%%%%%%%%%%%%%%%%%%%%%%%%%%%%%%%%%%%%%%%%%%%%%%%%%%%%%%%%%%%%%%%%%%%%%%%%%%%
%%%%%%%%%%%%%%%%%%%%%%%%%%%%%%%%%%%%%%%%%%%%%%%%%%%%%%%%%%%%%%%%%%%%%%%%%%%%%%%
% APPENDIX
%%%%%%%%%%%%%%%%%%%%%%%%%%%%%%%%%%%%%%%%%%%%%%%%%%%%%%%%%%%%%%%%%%%%%%%%%%%%%%%
%%%%%%%%%%%%%%%%%%%%%%%%%%%%%%%%%%%%%%%%%%%%%%%%%%%%%%%%%%%%%%%%%%%%%%%%%%%%%%%
\newpage
\appendix
\onecolumn
\section{Proofs}\label{sec:Appenix_proofs}
\begin{lemma}\label{Kenji_lemma}
If $\mathcal{L}$ is differential and L-smooth (assumption \ref{assump:L-smooth}) then 
$\mathcal{L}(\theta')\leq \mathcal{L}(\theta)+\nabla\mathcal{L}(\theta)\T (\theta'-\theta)+\frac{L}{2}\|\theta'-\theta\|^{2}$.
\end{lemma}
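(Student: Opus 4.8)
This is the standard ``descent lemma'', and the plan is to prove it by integrating the gradient along the segment joining $\theta$ to $\theta'$ and then using $L$-smoothness. Concretely, I would first define the scalar function $g(t)=\mathcal{L}\bigl(\theta+t(\theta'-\theta)\bigr)$ for $t\in[0,1]$; since $\mathcal{L}$ is differentiable and (by Assumption~\ref{assump:diff}) the domain is convex, the whole segment lies in the domain and $g$ is differentiable with $g'(t)=\nabla\mathcal{L}\bigl(\theta+t(\theta'-\theta)\bigr)\T(\theta'-\theta)$. The fundamental theorem of calculus then gives
\begin{equation}
\mathcal{L}(\theta')-\mathcal{L}(\theta)=g(1)-g(0)=\int_0^1\nabla\mathcal{L}\bigl(\theta+t(\theta'-\theta)\bigr)\T(\theta'-\theta)\,dt .
\end{equation}

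Next I would subtract the ``first-order'' term by writing $\nabla\mathcal{L}(\theta)\T(\theta'-\theta)=\int_0^1\nabla\mathcal{L}(\theta)\T(\theta'-\theta)\,dt$, so that
\begin{equation}
\mathcal{L}(\theta')-\mathcal{L}(\theta)-\nabla\mathcal{L}(\theta)\T(\theta'-\theta)=\int_0^1\Bigl(\nabla\mathcal{L}\bigl(\theta+t(\theta'-\theta)\bigr)-\nabla\mathcal{L}(\theta)\Bigr)\T(\theta'-\theta)\,dt .
\end{equation}
I would then bound the integrand by Cauchy--Schwarz followed by the $L$-smoothness bound of Assumption~\ref{assump:L-smooth}: the integrand is at most $\bigl\|\nabla\mathcal{L}(\theta+t(\theta'-\theta))-\nabla\mathcal{L}(\theta)\bigr\|\,\|\theta'-\theta\|\le L\,\|t(\theta'-\theta)\|\,\|\theta'-\theta\|=Lt\|\theta'-\theta\|^2$. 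Integrating $\int_0^1 Lt\,dt=\tfrac{L}{2}$ yields the claimed inequality $\mathcal{L}(\theta')\le\mathcal{L}(\theta)+\nabla\mathcal{L}(\theta)\T(\theta'-\theta)+\tfrac{L}{2}\|\theta'-\theta\|^2$.

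There is no genuine obstacle here: the only point requiring a word of care is that $\nabla\mathcal{L}$ must be defined along the entire segment $\{\theta+t(\theta'-\theta):t\in[0,1]\}$, which is exactly why convexity of the domain (Assumption~\ref{assump:diff}) is invoked; everything else is the routine chain-rule/FTC computation above.
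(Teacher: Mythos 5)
Your proposal is correct and follows essentially the same route as the paper's proof: parametrize the segment, apply the fundamental theorem of calculus, subtract the first-order term, and bound the integrand via Cauchy--Schwarz and $L$-smoothness to obtain the $\tfrac{L}{2}\|\theta'-\theta\|^2$ term. The only cosmetic difference is that the paper additionally uses openness of the domain to extend the segment slightly and explicitly verifies continuity of the directional derivative before invoking the FTC, a detail your argument covers implicitly.
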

\begin{proof}
Fix $\theta,\theta'\in \dom(\mathcal{L}) \subseteq \RR^{d_{}}$. Since $\dom(\mathcal{L})$ is a convex and open set,  there exists $\epsilon >0$ such that $\theta+t(\theta'-\theta)\in\dom(\mathcal{L})$ for all $t\in[-\epsilon,1+\epsilon]$. Set $\epsilon>0$ to be such a number.
Thus, we can define a function $\bar \Lcal: [-\epsilon, 1+\epsilon] \rightarrow \RR$ by $\bar \Lcal(t)=\Lcal(\theta+t(\theta'-\theta))$. With this, $\bar \Lcal(1)=\Lcal(\theta')$, $\bar \Lcal(0)=\Lcal(\theta)$, and $\nabla \bar \Lcal(t)=\nabla\Lcal(\theta+t(\theta'-\theta))\T (\theta'-\theta)$ for $t \in [0, 1] \subset (-\epsilon,1+\epsilon)$. From Assumption \ref{assump:L-smooth}, $\|\nabla \Lcal(\theta') - \nabla \Lcal(\theta)\| \le L_{ } \|\theta'-\theta\|$, therefore 
\begin{align*}
\|\nabla\bar  \Lcal(t')-\nabla\bar  \Lcal(t) \| &=\|[\nabla\Lcal(\theta+t'(\theta'-\theta)) -\nabla\Lcal(\theta+t(\theta'-\theta))\T (\theta'-\theta) \|
\\ &\le \|\theta'-\theta\|\|\nabla\Lcal(\theta+t'(\theta'-\theta)) -\nabla\Lcal(\theta+t(\theta'-\theta))  \| \\ & \le  L_{ }\|\theta'-\theta\|\|(t'-t)(\theta'-\theta)   \|
\\ & \le L_{ }\|\theta'-\theta\|^{2}\|t'-t   \|.
\end{align*}
Hence, $\nabla \bar \Lcal:[0, 1]\rightarrow \RR$ is Lipschitz continuous, and therefore continuous. By using the fundamental theorem of calculus with the continuous function $\nabla\bar  \Lcal:[0, 1]  \rightarrow \RR$,
\begin{align} \label{eq:5}
\nonumber \mathcal{L}(\theta')&=\mathcal{L}(\theta)+ \int_0^1 \nabla\mathcal{L}(\theta+t(\theta'-\theta))\T (\theta'-\theta)dt
\\\nonumber &=\mathcal{L}(\theta)+\nabla\mathcal{L}(\theta)\T (\theta'-\theta)+ \int_0^1 \left(\nabla\mathcal{L}(\theta+t(\theta'-\theta))-\nabla\mathcal{L}(\theta)\right)\T (\theta'-\theta)dt
\\ \nonumber & \le \mathcal{L}(\theta)+\nabla\mathcal{L}(\theta)\T (\theta'-\theta)+ \int_0^1 \|\nabla\mathcal{L}(\theta+t(\theta'-\theta))-\nabla\mathcal{L}(\theta)\| \|\theta'-\theta \|dt
\\ \nonumber & \le \mathcal{L}(\theta)+\nabla\mathcal{L}(\theta)\T (\theta'-\theta)+ \int_0^1 t L_{ \mathcal{}}\|\theta'-\theta\|^{2}dt
\\ & =  \mathcal{L}(\theta)+\nabla\mathcal{L}(\theta)\T (\theta'-\theta)+\frac{L}{2}\|\theta'-\theta\|^{2}.
\end{align}

\end{proof}

\textbf{Theorem (5.4).} \textit{
Let $\{\theta^{(t)}\}_{t=1}^\infty$ be the sequence generated by the update rule $\theta^{(t+1)}=\theta^{(t)}-\mu^{(t)}\Dt^{(t)}$ where $\Dt^{(t)}=\sum_{i=1}^K\alpha^{(t)}_ig_i^{(t)}$ is the Nash bargaining solution $(G^{(t)})\T G^{(t)}\alpha^{(t)}=1/\alpha^{(t)}$. %Define $\gamma^{(t)}=\min\limits_{i\in[K]}\frac{1}{\alpha^{(t)}_i}$ and we set $\mu^{(t)}=\frac{\gamma^{(t)}}{LK}$. 
Set $\mu^{(t)}=\min\limits_{i\in[K]}\frac{1}{LK\alpha^{(t)}_i}$. 
The sequence $\{\theta^{(t)}\}_{t=1}^\infty$ has a subsequence that converges to a Pareto stationary point $\theta^*$. Moreover all the loss functions $(\ell_1(\theta^{(t)}),...,\ell_K(\theta^{(t)}))$ converge to $(\ell_1(\theta^*),...,\ell_K(\theta^*))$.
}

% \begin{theorem*}[5.4]
% Let $\{\theta^{(t)}\}_{t=1}^\infty$ be the sequence generated by the update rule $\theta^{(t+1)}=\theta^{(t)}-\mu^{(t)}\Dt^{(t)}$ where $\Dt^{(t)}=\sum_{i=1}^K\alpha^{(t)}_ig_i^{(t)}$ is the Nash bargaining solution $(G^{(t)})\T G^{(t)}\alpha^{(t)}=1/\alpha^{(t)}$. %Define $\gamma^{(t)}=\min\limits_{i\in[K]}\frac{1}{\alpha^{(t)}_i}$ and we set $\mu^{(t)}=\frac{\gamma^{(t)}}{LK}$. 
% Set $\mu^{(t)}=\min\limits_{i\in[K]}\frac{1}{LK\alpha^{(t)}_i}$. 
% The sequence $\{\theta^{(t)}\}_{t=1}^\infty$ has a subsequence that converges to a locally Pareto optimal point $\theta^*$. Moreover all the loss functions $(\ell_1(\theta^{(t)}),...,\ell_K(\theta^{(t)}))$ converge to $(\ell_1(\theta^*),...,\ell_K(\theta^*))$.
% \end{theorem*}

\begin{proof}
We first note that if for some step we reach a Pareto stationary solution the algorithm halts and sequence stays fixed at that point and therefore converges; Next, we assume that we never get to an exact Pareto stationary solution at any finite step.

We note that the norm of $\Dt^{(t)}$ is $\sqrt{K}$ as $||\Dt^{(t)}||^2=\sum_{i=1}^K\alpha_ig_i\T \Dt^{(t)}=\sum_{i=1}^K\alpha_i\cdot 1/\alpha_i=K$. 
For each loss $\ell_i$ we have using Lemma \ref{Kenji_lemma}

\begin{align}
    &\ell_i(\theta^{(t+1)})\leq \ell_i(\theta^{(t)})-  \mu^{(t)}\nabla\ell_i(\theta^{(t)})\T\Dt^{(t)}+\frac{L}{2}||\mu^{(t)}\Dt^{(t)}||^2=\\\label{eq:temp}
    &\ell_i(\theta^{(t)})- \mu^{(t)}\frac{1}{\alpha^{(t)}_i}+\frac{(\mu^{(t)})^2 LK}{2}\\
    &=\ell_i(\theta^{(t)})- \frac{\mu^{(t)}}{\alpha^{(t)}_i}+\frac{\mu^{(t)}}{2}\min_j\frac{1}{\alpha_j^{(t)}}\leq \ell_i(\theta^{(t)})- \frac{\mu^{(t)}}{2\alpha^{(t)}_i}<\ell_i(\theta^{(t)})\label{eq:b1}
\end{align}\\ 
This shows that our update decreases all the loss functions. 
We can average over inequality \ref{eq:temp} over all losses and get for $\mathcal{L}(\theta)=\frac{1}{K}\sum_{i=1}^K\ell_i(\theta)$:

\begin{align}
    &\mathcal{L}(\theta^{(t+1)})\leq \mathcal{L}(\theta^{(t)})- \mu^{(t)}\frac{1}{K}\sum_{i=1}^K\frac{1}{\alpha^{(t)}_i}+\frac{(\mu^{(t)})^2LK}{2}\leq
    \mathcal{L}(\theta^{(t)})-LK(\mu^{(t)})^2+\frac{(\mu^{(t)})^2LK}{2}
    =\mathcal{L}(\theta^{(t)})-\frac{LK(\mu^{(t)})^2}{2}. \label{eq:b2}
\end{align}

% \begin{align}
%     &\mathcal{L}(\theta^{(t+1)})\leq \mathcal{L}(\theta^{(t)})- \mu^{(t)}\frac{1}{K}\sum_{i=1}^K\frac{1}{\alpha^{(t)}_i}+\frac{\mu^{(t)}^2LK}{2}\leq
%     \mathcal{L}(\theta^{(t)})-\mu^{(t)}\gamma^{(t)}+\frac{\mu^{(t)}^2LK}{2}
%     =\mathcal{L}(\theta^{(t)})-\frac{\gamma^{(t)}^2}{2KL}. \label{eq:b2}
% \end{align}\\

From this we can conclude that $\sum_{\tau=1}^t\frac{LK(\mu^{(\tau)})^2}{2}\leq \mathcal{L}(\theta_{1})-\mathcal{L}(\theta^{(t+1)})$. As $\mathcal{L}(\theta^{(t)})$ is bounded below we must have that the infinite series $\sum_{t=1}^\infty\frac{LK(\mu^{(t)})^2}{2}<\infty$, and also $\mu^{(t)}\to 0$. It follows that $\min_{i\in[K]} 1/\alpha^{(t)}_i\to0$ and therefore $||\alpha^{(t)}||\to\infty$.\\%since $K<\infty$ there exists $i'\in [K]$ and a subsequence $t_j$ such that $1/\alpha^{(t_j)}_{i'}\to 0$. 

We will now show that $||1/\alpha^{(t)}||$ is bounded for $t\to\infty$. As the sequence $\mathcal{L}(\theta^{(t)})$ is decreasing we have that the sequence $\theta^{(t)}$ is in the sublevel set $\{\theta:\mathcal{L}(\theta)\leq \mathcal{L}(\theta_0)\}$ which is closed and bounded and therefore compact. If follows that there exists $M<\infty$ such that $||g^{(t)}_i||\leq M$ for all $t$ and $i\in[K]$. We have for all $i$ and $t$, 
$|1/\alpha^{(t)}_i|=|(g^{(t)}_i)^T\theta^{(t)}|\leq \sqrt{K}||g^{(t)}_i||\leq \sqrt{K}M <\infty$, and so $||1/\alpha^{(t)}||$ is bounded. 
Combining these two results we have $||1/\alpha^{(t)}||\geq \sigma_K((G^{(t)})\T G^{(t)})||\alpha^{(t)}||$ where $\sigma_K((G^{(t)})\T G^{(t)})$ is the smallest singular value of $(G^{(t)})\T G^{(t)}$.
Since the norm of $\alpha^{(t)}$ goes to infinity and the norm $1/\alpha^{(t)}$ is bounded, it follows that $\sigma_K((G^{(t)})\T G^{(t)})\to 0$.

Now, since $\{\theta : \Lcal(\theta)\leq \Lcal{\theta_0}\}$ is compact there exists a subsequence $\theta^{(t_j)}$ that converges to some point $\theta^*$. % From this we get that $\mu^{(t)}\rightarrow 0$ and $\forall i:\, 1/\alpha^{(t)}_i \rightarrow 0$. We now see that the norm of the vector $\alpha^{(t)}$ goes to infinity and the norm of $1/\alpha^{(t)}$ goes to zero. Since $(G^{(t)})\T G^{(t)}\alpha^{(t)}=1/\alpha^{(t)}$ we have $||1/\alpha^{(t)}||\geq \sigma_K((G^{(t)})\T G^{(t)})||\alpha^{(t)}||$ where $\sigma_K((G^{(t)})\T G^{(t)})$ is the smallest singular value of $(G^{(t)})\T G^{(t)}$. As $||1/\alpha^{(t)}||\rightarrow 0$ and $||\alpha^{(t)}||\rightarrow\infty$ we must have $\sigma_K((G^{(t)})\T G^{(t)})\rightarrow 0$. \\
% As the sequence $\mathcal{L}(\theta^{(t)})$ is decreasing we have that the sequence $\theta^{(t)}$ is in the sublevel set $\{\theta:\mathcal{L}(\theta)\leq \mathcal{L}(\theta_0)\}$ which is closed and bounded and therefore compact. This means that a we have a subsequence $\theta_{i_j}$ that converges to $\theta^*$. 
As $\sigma_K((G^{(t)})^T G^{(t)})\rightarrow 0$ we have from continuity that $\sigma_K(G_*\T G_*)=0$ where $G_*$ is the matrix of gradients at $\theta^*$. This means that the gradients at $\theta$ are linearly dependent and therefore $\theta^*$ is Pareto stationary by assumption \ref{assump:independence}. As for all $i$ the sequence $\{\ell_i(\theta^{(t)})\}_{t=1}^\infty$ is monotonically decreasing and bounded below they all converges. Since $\ell_i(\theta^*)$ is the limit of a subsequence we get that $\ell_i(\theta^{(t)})\xrightarrow{t\rightarrow\infty}\ell_i(\theta^*)$.
%This means our sequence converges to a point where $\sigma_K(G_t^TG)=0$ and  the gradients are linearly dependent. These points are on the Pareto front by our assumption thus concluding our proof. 

\end{proof}

We now show that if we add a convexity assumption then we can prove convergence to the Pareto front.

\textbf{Theorem (5.5).} \textit{
Let $\{\theta^{(t)}\}_{t=1}^\infty$ be the sequence generated by the update rule $\theta^{(t+1)}=\theta^{(t)}-\mu^{(t)}\Dt^{(t)}$ where $\Dt^{(t)}=\sum_{i=1}^K\alpha^{(t)}_ig_i^{(t)}$ is the Nash bargaining solution $(G^{(t)})\T G^{(t)}\alpha^{(t)}=1/\alpha^{(t)}$. Set $\mu^{(t)}=\min\limits_{i\in[K]}\frac{1}{LK\alpha^{(t)}_i}$. If we also assume that all the loss functions are convex then the sequence $\{\theta^{(t)}\}_{t=1}^\infty$ converges to a Pareto optimal point $\theta^*$.
}

% \begin{theorem}
% Let $\{\theta^{(t)}\}_{t=1}^\infty$ be the sequence generated by the update rule $\theta^{(t+1)}=\theta^{(t)}-\mu^{(t)}\Dt^{(t)}$ where $\Dt^{(t)}=\sum_{i=1}^K\alpha^{(t)}_ig_i^{(t)}$ is the Nash bargaining solution $(G^{(t)})\T G^{(t)}\alpha^{(t)}=1/\alpha^{(t)}$. Set $\mu^{(t)}=\min\limits_{i\in[K]}\frac{1}{LK\alpha^{(t)}_i}$. If we also assume that all the loss functions are convex then the sequence $\{\theta^{(t)}\}_{t=1}^\infty$ converges to a Pareto optimal point $\theta^*$.
% % Define $\gamma^{(t)}=\min\limits_{i\in[K]}\frac{1}{\alpha^{(t)}_i}$ and we set $\mu^{(t)}=\frac{\gamma^{(t)}}{LK}$. If we assume that all the loss functions are convex then he sequence $\{\theta^{(t)}\}_{t=1}^\infty$ converges to a Pareto optimal point $\theta^*$.
% \end{theorem}

\begin{proof}
We note that this proof uses intermediate results from the proof of theorem \ref{trm:nonconvex}. Given theorem \ref{trm:nonconvex} it suffices to prove that the sequence $\{\theta^{(t)}\}_{t=1}^\infty$ converges, that would mean it converges to the partial limit $\theta^*$ that is Pareto stationary, and from convexity it would be Pareto optimal (as the optimizer of the convex combination of losses). For a convex and differential loss function, we have 
\begin{equation}\label{eq:convex}
\ell(\theta')\geq \ell(\theta)+\nabla \ell(\theta)\T (\theta'-\theta)    
\end{equation}
We can bound
\begin{align}
    ||\theta^{(t+1)}-\theta^*||^2 &=||\theta^{(t)}-\mu^{(t)}\Dt^{(t)}-\theta^*||^2\\
    &=||\theta^{(t)}-\theta^*||^2+(\mu^{(t)})^2||\Dt^{(t)}||^2-2\mu^{(t)}(\Dt^{(t)})\T (\theta^{(t)}-\theta^*)\label{eq:a1}\\
    &=||\theta^{(t)}-\theta^*||^2+(\mu^{(t)})^2 K-2\mu^{(t)}\sum_i\alpha^{(t)}_i(g^{(t)}_i)\T(\theta^{(t)}-\theta^*)\label{eq:a2}\\
    &\leq ||\theta^{(t)}-\theta^*||^2+(\mu^{(t)})^2 K+2\mu^{(t)}\sum_i\alpha^{(t)}_i(\ell_i(\theta^*)-\ell_i(\theta^{(t)}))\label{eq:a3}\\
    &\leq ||\theta^{(t)}-\theta^*||^2+(\mu^{(t)})^2 K+2\mu^{(t)}\sum_i\alpha^{(t)}_i(\ell_i(\theta^{(t+1)})-\ell_i(\theta^{(t)}))\label{eq:a4}\\
    &\leq ||\theta^{(t)}-\theta^*||^2+(\mu^{(t)})^2 K-2\mu^{(t)}\sum_i\alpha^{(t)}_i\frac{\mu^{(t)}}{2\alpha^{(t)}_i}\label{eq:a5}\\
    &= ||\theta^{(t)}-\theta^*||^2\label{eq:a6}
\end{align}
In Eq.~\ref{eq:a2} we use the definition of $\Dt^{(t)}$ and the fact that its norm equals $\sqrt{K}$. In Eq. \ref{eq:a3} we use convexity and Eq. \ref{eq:convex}. Eq. \ref{eq:a4} uses the fact that we show the losses are monotonically decreasing and converging to $\ell_i(\theta^*)$. In Eq. \ref{eq:a5} we use Eq. \ref{eq:b1}.

We have that the sequence $||\theta^{(t)}-\theta^*||$ is monotonically decreasing and bounded below by zero. Also, it has a subsequence that converges to zero, and so it must hold that the sequence $||\theta^{(t)}-\theta^*||$ also converge to zero, or equivalently $\theta^{(t)}\to \theta^*$.

% Following eq. \ref{eq:b2} we proved that $\sum_{i=1}^\infty\frac{\gamma_t^2}{2KL}=\sum_{i=1}^\infty\frac{KL\mu_t^2}{2}<\infty$. From this we get that $\sum_{i=1}^\infty2{K\mu_t^2}<\infty$. Therefore, for all $\epsilon>0$ we have some $T(\epsilon)$ such that $\sum_{i=t}^{t_2}2{K\mu_t^2}\leq \epsilon/2$ for all $t>T(\epsilon)$ and $t_2>t$. As we have a subsequence $\{\theta_{i_j}\}_{j=1}^\infty$ that converges to $\theta^*$ for all $\epsilon>0$ there exists some $T_2(\epsilon)$ such that for all $i_j> T_2(\epsilon)$ we have $||\theta_{i_j}-\theta^*||^2<\epsilon/2$. We pick $T_3(\epsilon)$ to the the index of the next element in the subsequence after $T_2(\epsilon)$.\\

% Now for all $t>\max\{T(\epsilon),T_3(\epsilon)\}$  define $\tau$ to be the largest element of the subsequence below $t$ we get
% \begin{equation}
%     ||\theta_t-\theta^*||^2\leq ||\theta_\tau-\theta^*||^2+\sum_{n=\tau}^{t-1}2\mu^2_tK<\epsilon
% \end{equation}
% proving that $\theta_t$ converges to $\theta^*$. As also not that since the loses are convex, the local Pareto optimal point $\theta^*$ is in fact a Pareto optimal point. 
\end{proof}

\textbf{Proposition (3.1).} \textit{
Denote the objective for the optimization problem in Eq.~\ref{eq:ccp_opt} by $\phi(\alpha)=\sum_i\beta_i(\alpha)+\varphi(\alpha)$. Then, $\phi\left(\alpha^{(\tau+1)}\right)\leq \phi\left(\alpha^{(\tau)}\right)$ for all $\tau\geq 1$.
}

% \begin{proposition}

% Let $\{\alpha^{(\tau)}\}_{\tau=1}^{\infty}$ be the sequence generated by the convex–concave procedure. Then, $\phi\left(\alpha^{(\tau+1)}\right)\leq \phi\left(\alpha^{(\tau)}\right)$ for all $\tau\geq 1$.

% \end{proposition}

\begin{proof}
In our concave-convex procedure, we use the following linearization at the $\tau$-th iteration:
$$
\tilde \varphi_{\tau}(\alpha)=\varphi(\alpha^{(\tau)} ) + \nabla\varphi(\alpha^{(\tau)} )\T(\alpha-\alpha^{(\tau)} ).
$$
Then,
\begin{align} \label{eq:1}
\tilde \varphi_{\tau}(\alpha^{(\tau)})=\varphi(\alpha^{(\tau)}).
\end{align}
Moreover, since $\varphi$ is concave and differentiable, we have that 
\begin{align} \label{eq:2}
\varphi(\alpha^{(\tau+1)} ) \le\varphi(\alpha^{(\tau)} ) + \nabla\varphi(\alpha^{(\tau)} )\T(\alpha^{(\tau+1)}-\alpha^{(\tau)} )=\tilde \varphi_{\tau}(\alpha^{(\tau+1)}).
\end{align}
Furthermore, since we minimize the convex objective $\sum_i \beta_i(\alpha)+\tilde \varphi(\alpha)$ at each iteration of our concave-convex procedure (in the convex feasible set),
\begin{align} \label{eq:3}
\sum_i \beta_i(\alpha^{(\tau)})+\tilde \varphi_{\tau}(\alpha^{(\tau)}) \ge \sum_i \beta_i(\alpha^{(\tau+1)})+\tilde \varphi_{\tau}(\alpha^{(\tau+1)}).
\end{align}
Using Eq. \ref{eq:1}--Eq. \ref{eq:3}, we have that
\begin{align*}
\phi(\alpha^{(\tau)}) = \sum_i \beta_i(\alpha^{(\tau)})+ \varphi(\alpha^{(\tau} ) = \sum_i \beta_i(\alpha^{(\tau)})+\tilde \varphi_{\tau}(\alpha^{(\tau)}) &\ge \sum_i \beta_i(\alpha^{(\tau+1)})+\tilde \varphi_{t}(\alpha^{(\tau+1)})
\\ & \ge \sum_i \beta_i(\alpha^{(\tau+1)}) + \varphi(\alpha^{(\tau+1)} )=\phi(\alpha^{(\tau+1)}).
\end{align*}
This proves the statement.  
  
\end{proof}

\section{Experimental Details}\label{sec:Appenix_exp_details}

We provide here full experimental details for all experiments described in the main text.

\textbf{Implementation Details.} We apply all gradient manipulation methods to the gradients of the shared weights, with the exception of IMTL-G, which was applied to the feature-level gradients, as was originally proposed by the authors. We also tried applying IMTL-G to the shared-parameters gradient for a fair comparison, but it did not perform as well. We set the CAGrad's $c$ hyperparameter to 0.4, which was reported to yield the best performance for NYUv2 and Cityscapes~\cite{liu2021conflict}. For DWA~\cite{Liu2019EndToEndML} we set the temperature hyperparameter to $2$ which was found empirically to
be optimum across all architectures. For RLW~\cite{lin2021closer} we sample the weights from a normal distribution.

\textbf{QM9.} We adapt the QM9 example in PyTorch Geometric~\cite{Fey/Lenssen/2019}, and train the popular GNN model from~\citet{gilmer2017neural}. We use the publicly available\footnote{\url{https://github.com/pyg-team/pytorch_geometric/blob/master/examples/qm9_nn_conv.py}} implementation, the implementation is provided by \citet{Fey/Lenssen/2019}. We use 110K molecules for training, 10K for validation, and 10K as a test set. Each task's targets are normalized to have zero mean and unit standard deviation. We train each method for $300$ epochs with batch-size of $120$ and search for learning-rate (lr) in $\{1e-3, 5e-4, 1e-4\}$. We use a ReduceOnPlateau scheduler to decrease the lr when the validation $\Delta_m$ metric stops improving. Additionally, we use the validation $\Delta_m$ for early stopping.

\textbf{Scene Understanding.} We follow the training and evaluation procedure used in previous work on MTL~\cite{Liu2019EndToEndML,yu2020gradient,liu2021conflict}. \revision{However, unlike~\citep{Liu2019EndToEndML}, we add data augmentations (DA) during training for all the compared methods, similar to~\cite{liu2021conflict,liu2020towards}.} We train each method for $200$ epochs with an initial learning-rate of $1e-4$. The learning-rate is reduced to $5e-5$ after $100$ epochs. 
For MTL methods, we train a Multi-Task Attention Network (MTAN) \citep{Liu2019EndToEndML} built upon SegNet~\cite{badrinarayanan2017segnet}. Similar to previous works~\cite{liu2021conflict}, the STL baseline refers to training task-specific SegNet models. We use a batch size of 2 and 8 for NYUv2 and CityScapes respectively. To align with previous work on MTL~\citet{Liu2019EndToEndML,yu2020gradient,liu2021conflict} we report the test performance averaged over the last $10$ epochs.

\textbf{MT10.} Following previous works~\cite{yu2020gradient, liu2021conflict,sodhani2021multi}, we use multitask Soft Actor-Critic (SAC)~\cite{haarnoja2018soft} as the base RL algorithm for PCGrad, CAGrad, and \ourmethod{}. We follow the same experiment setup
from and evaluation protocol as in~\citet{sodhani2021multi,liu2021conflict}. Each method is trained over 2 million steps with a batch size of 1280. 
The agent is evaluated once every $10$K environment
steps to obtain the average success over tasks. The reported success rate for the agent is the best average performance over all evaluation steps. We repeat this procedure over $10$ random seeds, and the performance of each method is obtained by averaging the mean success over all random seeds. For all \ourmethod{} experiments, we use a single CCP step in order to speed up computation.

\begin{figure*}[t]
\centering
    \begin{subfigure}[Average loss]{
    \includegraphics[width=0.25\linewidth]{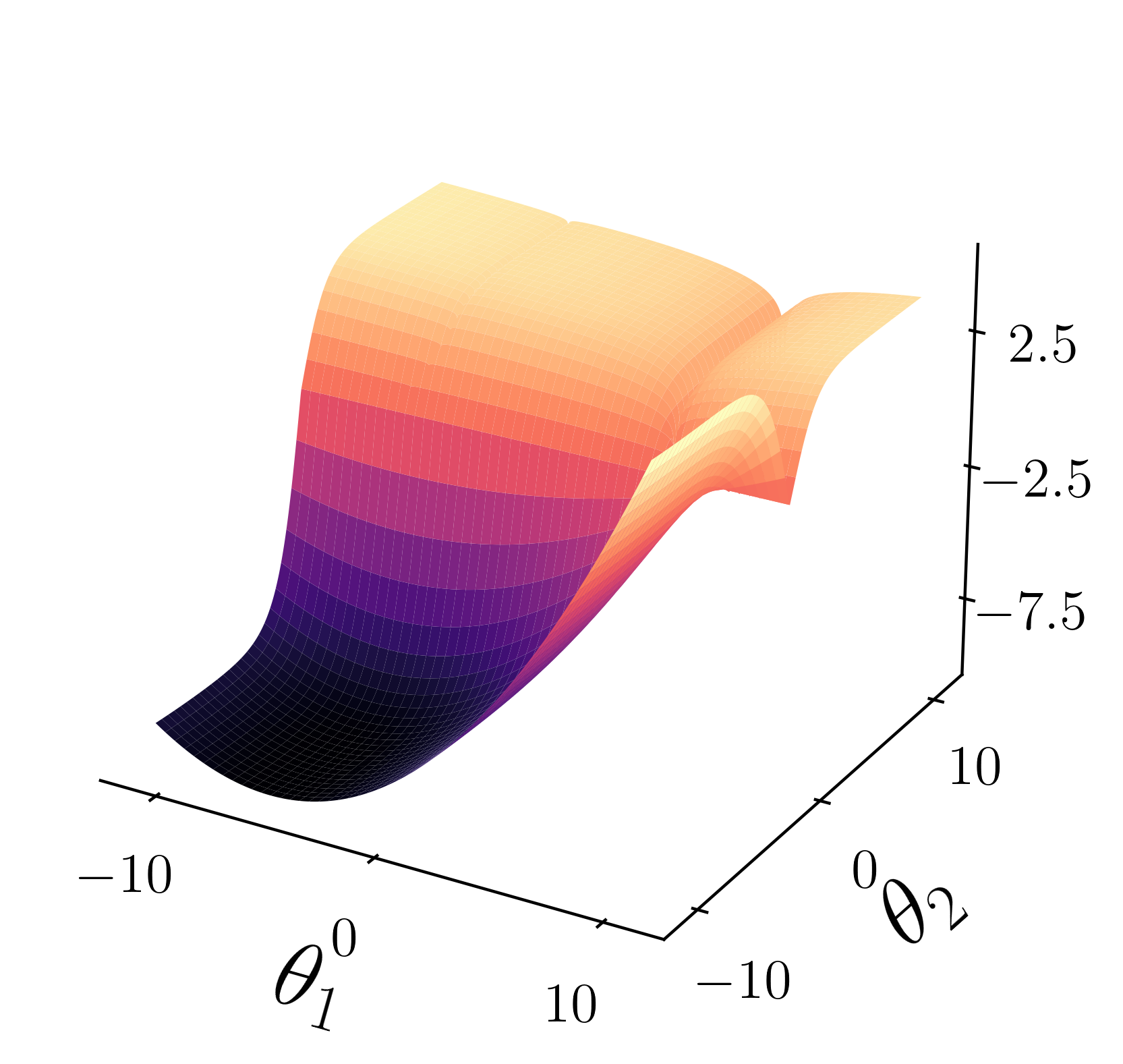}
    }
    \end{subfigure}
    % \hfill
    \begin{subfigure}[$\ell_1$]{
    \includegraphics[width=0.25\linewidth]{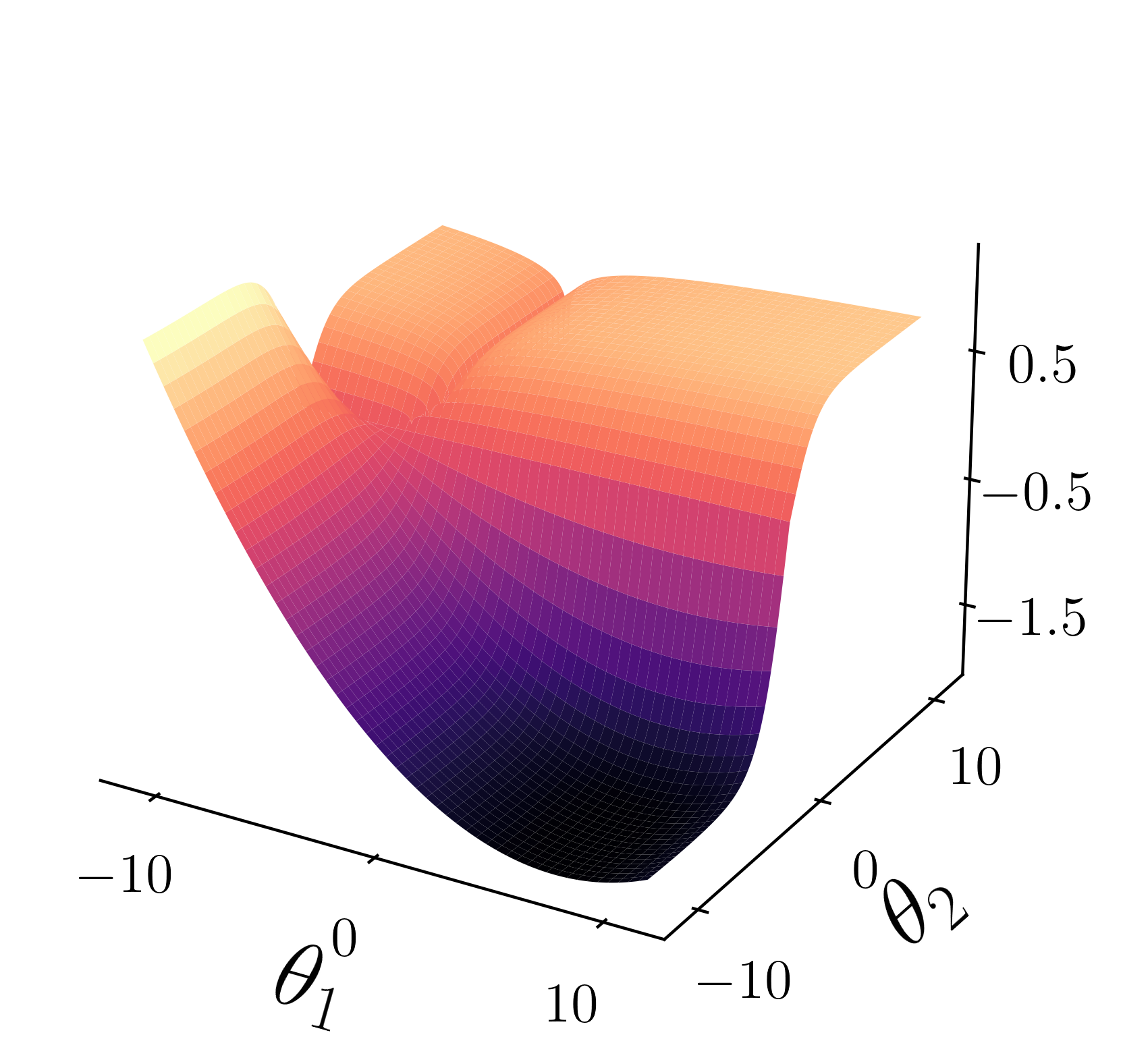}
    }
     \end{subfigure}
    %  \hfill
     \begin{subfigure}[$\ell_2$]{
    \includegraphics[width=0.25\linewidth]{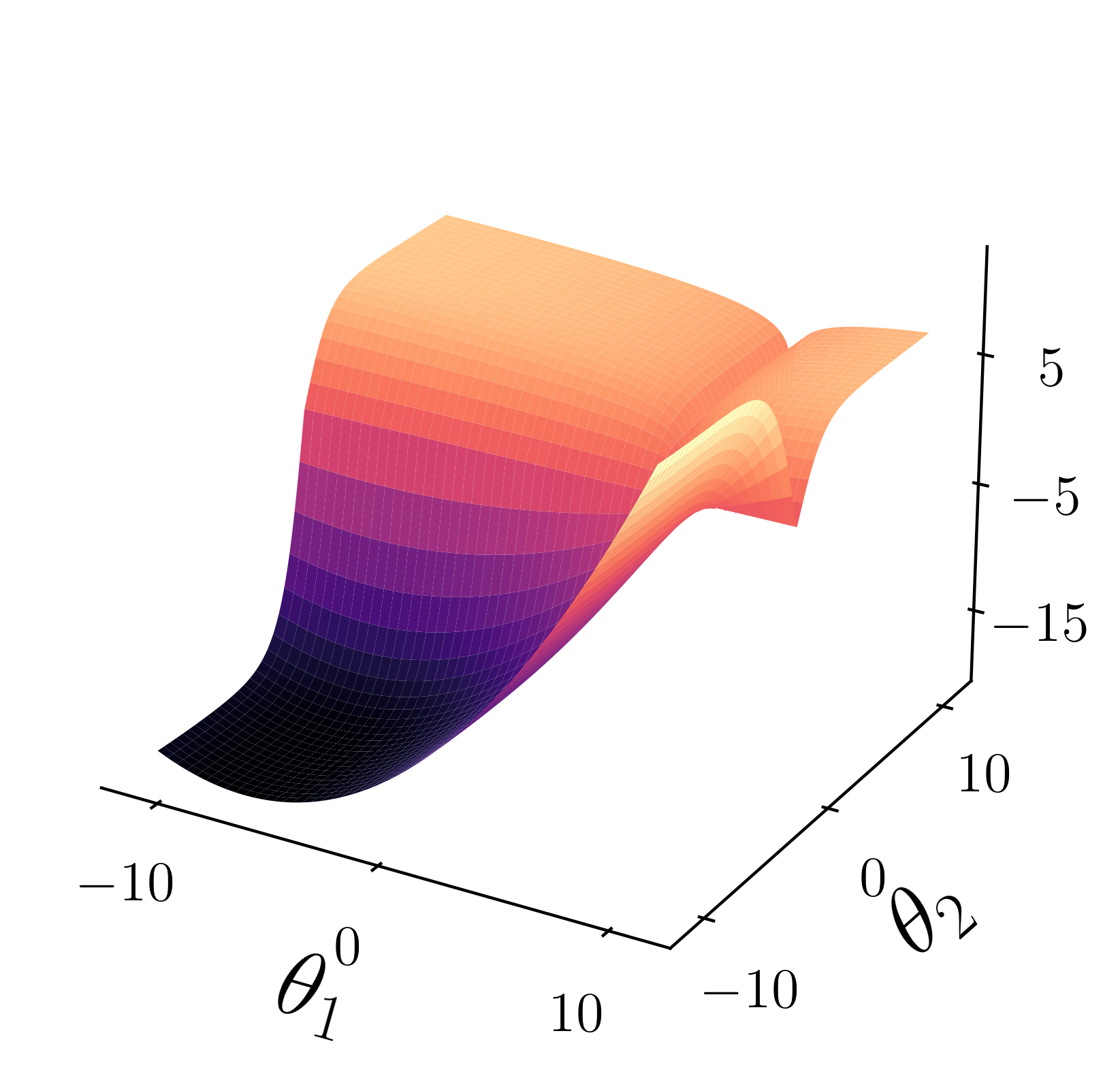}
    }
     \end{subfigure}
    \caption{\textit{Illustrative example}. Visualization of the loss surfaces in our illustrative example of Figure \ref{fig:toy}}
    \label{fig:illustrative_loss_landscape}
\end{figure*}

\textbf{Illustrative Example.}\label{sec:appendix_illustrative} 
We provide here the details for the illustrative example of Figure~\ref{fig:toy}. We use a slightly modified version of the illustrative example in~\cite{liu2021conflict}. We first present the learning problem from ~\cite{liu2021conflict}: Let $\theta=(\theta_1, \theta_2)\in \mathbb{R}^2$, and consider the following objectives:
\begin{align*}
    &\tilde{\ell}_1(\theta)=c_1(\theta)f_1(\theta) + c_2(\theta)g_1(\theta) \quad \text{and} \quad
    \tilde{\ell}_2(\theta)=c_1(\theta)f_2(\theta) + c_2(\theta)g_2(\theta), \text{where} \\
    &f_1(\theta)=\log (\max(|0.5(-\theta_1-7)-\tanh(-\theta_2)|,5e-6))+6,\\
    &f_2(\theta)=\log (\max(|0.5(-\theta_1+3)-\tanh(-\theta_2)+2|,5e-6))+6,\\
    &g_1(\theta)=((-\theta_1+7)^2+0.1\cdot (-\theta_2-8)^2)/10-20,\\
    &g_2(\theta)=((-\theta_1-7)^2+0.1\cdot (-\theta_2-8)^2)/10-20,\\
    &c_1(\theta)=\max(\tanh(0.5\theta_2), 0) \quad\text{and}\quad  c_2(\theta)=\max(\tanh(-0.5\theta_2), 0)\\
\end{align*}
We now set $\ell_1=0.1\cdot\tilde{\ell}_1$ and $\ell_2=\tilde{\ell}_2$ as our objectives, see Figure~\ref{fig:illustrative_loss_landscape}. We use five different initialization points $\{(-8.5, 7.5), (0.0, 0.0), (9.0, 9.0), (-7.5, -0.5), (9, -1.0)\}$. We use the Adam optimizer and train each method for 35K iteration with learning rate of $1e-3$.

\section{Computing Task Gradient at the Features-Level}\label{sec:Appendix_speedup_feature_level}

\revision{One common approach for speeding and scaling up MTL methods is using feature-level gradients (from the representation layer) as a surrogate for the task-level gradients computed over the entire shared backbone~\cite{sener2018multi,liu2020towards,javaloy2021rotograd}. In this section we evaluate \ourmethod{} while using the feature-level gradients for computing the Nash bargaining solution. On the QM9 dataset, we found this approach to accelerate training by $\sim\times 6$. However, this acceleration method greatly hurts the performance of \ourmethod{}, yielding a test $\Delta_m$ of $179.2$ (compared to $62.0$ when using full gradients). 
This result is not surprising, since we are mainly interested in the inner products of gradients. Consider $g_i\T g_j=(\nabla_{\theta}z\nabla_z\ell_i)\T \nabla_{\theta}z\nabla_z\ell_j$, where $z$ is the feature representation and $\theta$ the shared parameters vector. We see that for $\nabla_z\ell_i\T \nabla_z\ell_j$ to accurately approximate $g_i\T g_j$ we need $\nabla_{\theta}z\T \nabla_{\theta}z\approx I$ which is a strong and restricting requirement.}

\begin{figure*}[t]
\centering
    \includegraphics[width=0.47\linewidth]{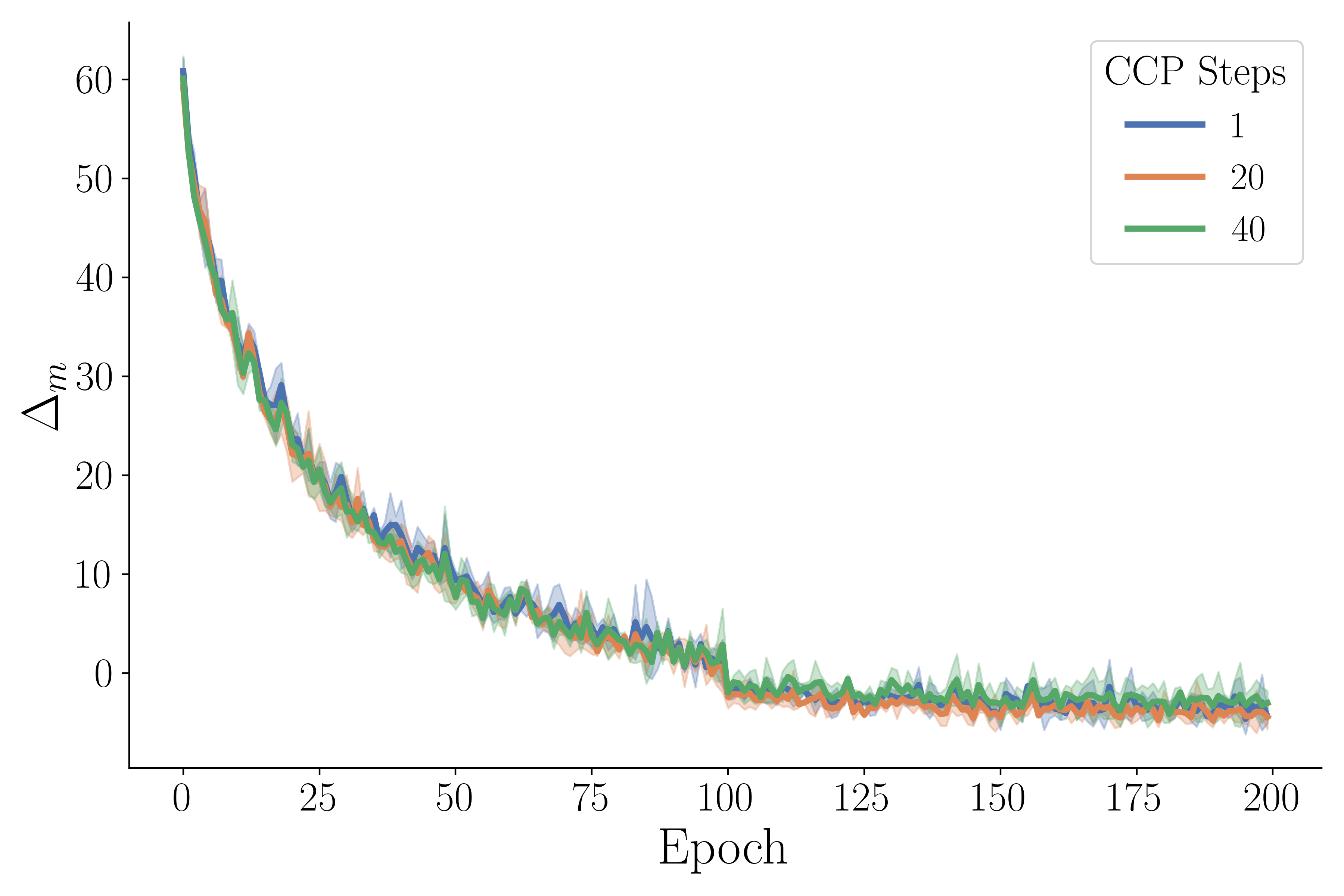}
    \caption{\textit{NYUv2.} The mean and standard divination of test $\Delta_m $ throughout the training process, for Nash-MTL with 1, 20, and 40 CCP steps.}
    \label{fig:nyu_ccp}
\end{figure*}

\section{Additional Experiments}\label{sec:Appenix_experiments}

\subsection{Full Results for Multi-task Regression}
\label{sec:Appendix_qm9}

We provide here the full results for the QM9 experiment of Section~\ref{sec:qm9}. The results for all methods over all $11$ tasks are presented in Table~\ref{tab:qm9_full}. \ourmethod{} achieves the best $\Delta_m$ and MR performance. Despite being a simple approach, \textit{SI} performs well compared to more sophisticated baselines. It achieves the third/second best $\Delta_m$ and MR respectively. The other scale-invariant method, \textit{IMTL-G}, also performs well in this learning setup.
%The other scale-invariant methods \textit{SI} and \textit{IMTL-G} perform well in this learning setup. 

\begin{table*}[!h]
\footnotesize
\centering
\caption{\textit{QM9}. Test performance averaged over 3 random seeds.}
    \vskip 0.11in
\begin{tabular*}{1.\textwidth}{@{\extracolsep{\fill}}clccccccccccccc@{}}
\toprule
 &  & $\mu$ & $\alpha$ & $\epsilon_{\text{HOMO}}$ & $\epsilon_{\text{LUMO}}$ & $\langle R^2 \rangle$ & ZPVE & $U_0$ & $U$ & $H$ & $G$ & $c_v$ &  \\ \cmidrule(lr){3-13}
 &  & \multicolumn{11}{c}{MAE $\downarrow$} & \textbf{MR} $\downarrow$ & $\mathbf{\Delta_m\%}$ $\downarrow$ \\ 
 \midrule
\multicolumn{2}{c}{STL} & $0.067$ & $0.181$ & $60.57$ & $53.91$ & $0.502$ & $4.53$ & $58.8$ & $64.2$ & $63.8$ & $66.2$ & $0.072$ &  \\ \midrule
\multicolumn{2}{c}{LS} & $0.106$ & $0.325$ & $\mathbf{73.57}$ & $89.67$ & $5.19$ & $14.06$ & $143.4$ & 144.2 & 144.6 & 140.3 & 0.128 & 6.8 & 177.6 \\
\multicolumn{2}{c}{SI} & 0.309 & 0.345 & 149.8 & 135.7 & $\mathbf{1.00}$ & $\mathbf{4.50}$ & $\mathbf{55.3}$ & $\mathbf{55.75}$ & $\mathbf{55.82}$ & $\mathbf{55.27}$ & 0.112 & 4.0 & 77.8 \\
\multicolumn{2}{c}{RLW} & 0.113 & 0.340 & 76.95 & 92.76 & 5.86 & 15.46 &  156.3 & 157.1 & 157.6 & 153.0 & 0.137 & 8.2 & 203.8 \\
\multicolumn{2}{c}{DWA} & 0.107 & 0.325 & 74.06 & 90.61 & 5.09 & 13.99 & 142.3 & 143.0 & 143.4 & 139.3 & 0.125 & 6.4 & 175.3 \\
\multicolumn{2}{c}{UW} & 0.386 & 0.425 & 166.2 & 155.8 & 1.06 & 4.99 & 66.4 & 66.78 & 66.80 & 66.24 & 0.122 & 5.3 & 108.0 \\
\multicolumn{2}{c}{MGDA} & 0.217 & 0.368 & 126.8 &  104.6 & 3.22 & 5.69 & 88.37 & 89.4 & 89.32 & 88.01 & 0.120 & 5.9 & 120.5 \\
\multicolumn{2}{c}{PCGrad} & 0.106 & 0.293 & 75.85 & 88.33 & 3.94 & 9.15 & 116.36 & 116.8 & 117.2 & 114.5 & 0.110 & 5.0 & 125.7 \\
\multicolumn{2}{c}{CAGrad} & 0.118 & 0.321 & 83.51 & 94.81 & 3.21 & 6.93 & 113.99 & 114.3 & 114.5 & 112.3 & 0.116 & 5.7 & 112.8 \\ 
\multicolumn{2}{c}{IMTL-G} & 0.136 & 0.287 & 98.31 & 93.96 & 1.75 & 5.69 & 101.4 & 102.4 & 102.0 & 100.1 & 0.096 & 4.7 & 77.2 \\ 
\midrule
\multicolumn{2}{c}{\ourmethod{}} & $\mathbf{0.102}$ & $\mathbf{0.248}$ & 82.95 & $\mathbf{81.89}$ & 2.42 & 5.38 & 74.5 & 75.02 & 75.10 & 74.16 & $\mathbf{0.093}$ & $\mathbf{2.5}$ & $\mathbf{62.0}$ \\ \bottomrule
\end{tabular*}
\label{tab:qm9_full}
\end{table*}

% \subsection{Exploring Design Choices}
% \label{sec:Appendix_ablation}
% \an{Now that we moved the speedup part to a dedicated section, I think we can remoce the Exploring Design Choices section and just have a EFFECT OF THE NUMBER  OF CCP STEPS section}
% % We conduct ablation experiments to show the affect of different design choices on Nash-MTL performance.
% In this section, we explore some of the design choices for our \ourmethod{} method. First, we return to the discussion of Section~\ref{sec:speedup}, considering different approaches to practically speed up the training process. We examine the effect of updating the task weights less frequently on both training time and inference. In addition, we investigate the effect of replacing task gradients with feature-level gradients, as commonly done in previous works~\cite{sener2018multi,liu2020towards}. Next, we evaluate \ourmethod{} while varying the number of iteration in the CCP.

\subsection{Effect of the Number of CCP steps}
\label{sec:Appendix_ccp_steps}

In this section, we investigate the effect of varying the number of CCP steps in our efficient approximation to $G\T G\alpha=1/\alpha$ (presented in Section~\ref{sec:ccp}).  We use the NYUv2 dataset and train Nash-MTL with CCP sequences of 1, 20, and 40 steps at each (parameters) optimization step.

We found that increasing the CCP sequence improves the approximation to the optimal $\alpha$. Using a single CCP iteration results with $G\T G\alpha\approx 1/\alpha$ in $91.5\%$ of the optimization steps, whereas increasing the number of iterations to 20 increases the proportion of optimal solutions to $93.5\%$.
However, we found the improved solution to have no significant improvement in MTL performance. Figure~\ref{fig:nyu_ccp} presents the test $\Delta_m$ throughout the training process. 

% The results show that increasing the number of CCP steps yields no significant improvement in performance.

\subsection{Modifying the CCP Objective}

\revision{In this section we examine the effect of changing the objective of the CCP procedure described in \ref{sec:ccp} (Eq.~\ref{eq:ccp_opt}). Here we first solve the convex optimization problem of Eq.~\ref{eq:ccp_opt_take_1} to obtain $\alpha_0$. If $G\T G \alpha_0\approx 1/\alpha_0$ we stop. Else we use the CCP procedure with objective $\varphi(\alpha)$, starting at $\alpha_0$ (dropping the addition $\sum_i \beta_i$ term from Eq.~\ref{eq:ccp_opt}). While this objective is more natural, in practice we observe a performance degradation in terms of MTL performance. We obtain $\Delta_m=64.4$ for the QM9 dataset (vs. $62$ reported in the paper), $\Delta_m=-3.5$ (vs. $-4$) for NYUv2 and $\Delta_m=8.8$ (vs. $6.8$) for Cityscapes.}

\subsection{Visualizing Task Weights}

Our method, Nash-MTL, can essentially be viewed as a principled approach for producing dynamic task weights. Here we visualize these task weights throughout the training process using the NYUv2 dataset (Figure~\ref{fig:task_weights}) and the MT10 dataset (Figure~\ref{fig:mt10_weights}).

% \begin{figure}[t]
%     \centering
%     \includegraphics[width=.5\linewidth, clip]{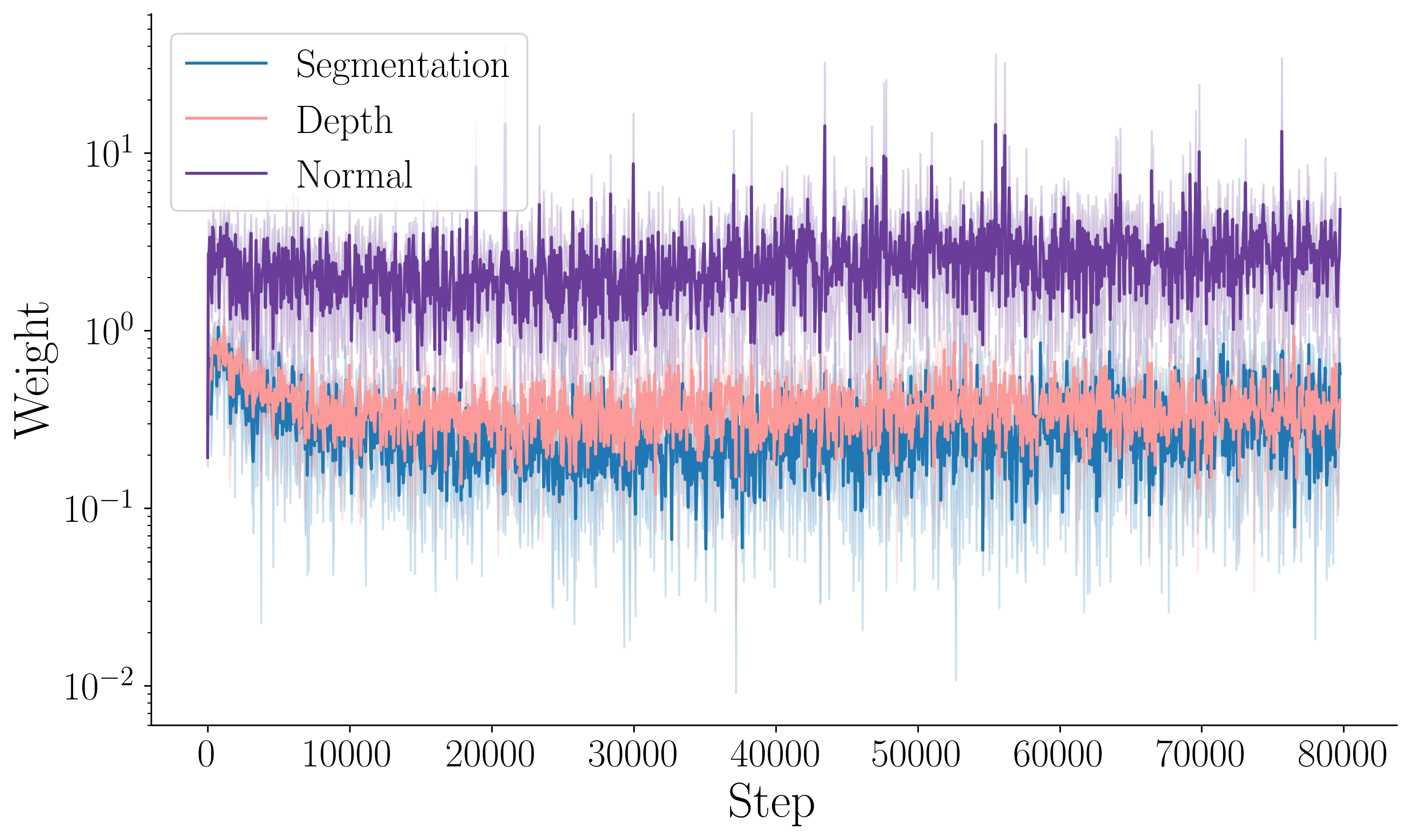}
%     \caption{.}
%     \label{fig:nyu_weights}
% \end{figure}

\begin{figure*}[t]
\centering
    \begin{subfigure}[NYUv2]{
    \includegraphics[width=0.47\linewidth]{Figures/nyu/nyu_weights.png}
    \label{fig:nyu_weights}
    }
    \end{subfigure}
    % \hfill
    \begin{subfigure}[MT10]{
    \includegraphics[width=0.47\linewidth]{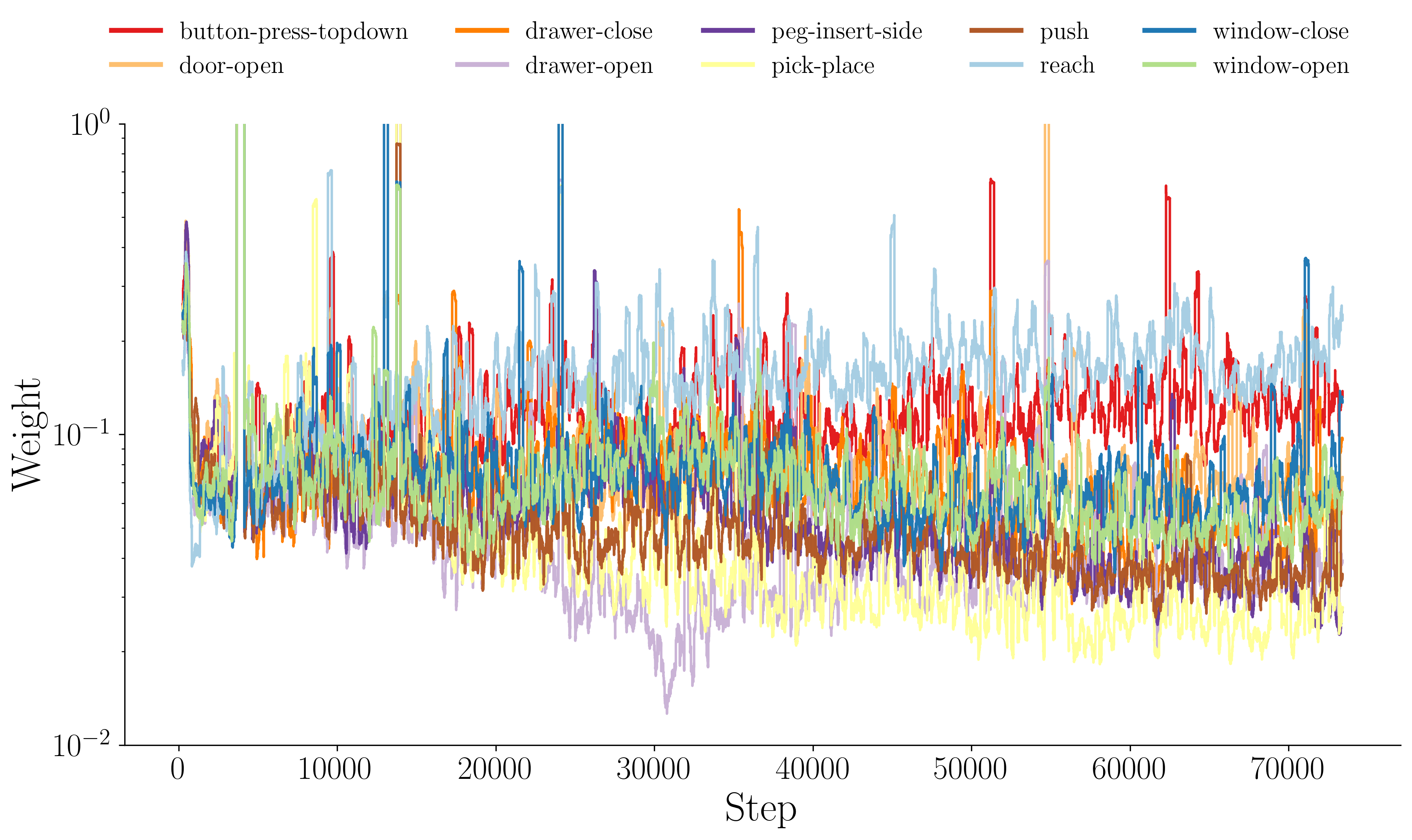}
    \label{fig:mt10_weights}
    }
     \end{subfigure}
    \caption{\textit{Task Weights}. Task weights obtained from \ourmethod{} throughout the optimization process, for \textbf{(a)} NYUv2, and; \textbf{(b)} MT10 with weight update frequency of 100. For better visualization, each point corresponds to a moving average with window size 200.}
    \label{fig:task_weights}
\end{figure*}

\subsection{Verifying the Task Independence Assumption}

Here we provide an empirical justification for our assumption in Section~\ref{sec:Method} which we state here once again: we assume that the task gradients are linearly independent for each point $\theta$ that is not Pareto stationary. To investigate whether this assumption holds in our experiments, we observe the smallest singular value of gradients Gram matrix $\sigma_K(G\T G)$. The results are presented in Figure~\ref{fig:sigma_min}. We see that for both datasets the $\sigma_k$ decreases as the learning progresses. For the NYUv2 experiment, the smallest singular value remains fairly large throughout the entire training process. On the QM9 dataset, $\sigma_K$ decreases more significantly, to around $\sim1e-8$. %This may suggest that the optimization approaching to a point that is (locally) optimal.

\begin{figure*}[ht]
\centering
    \begin{subfigure}[NYUv2]{
    \includegraphics[width=0.47\linewidth]{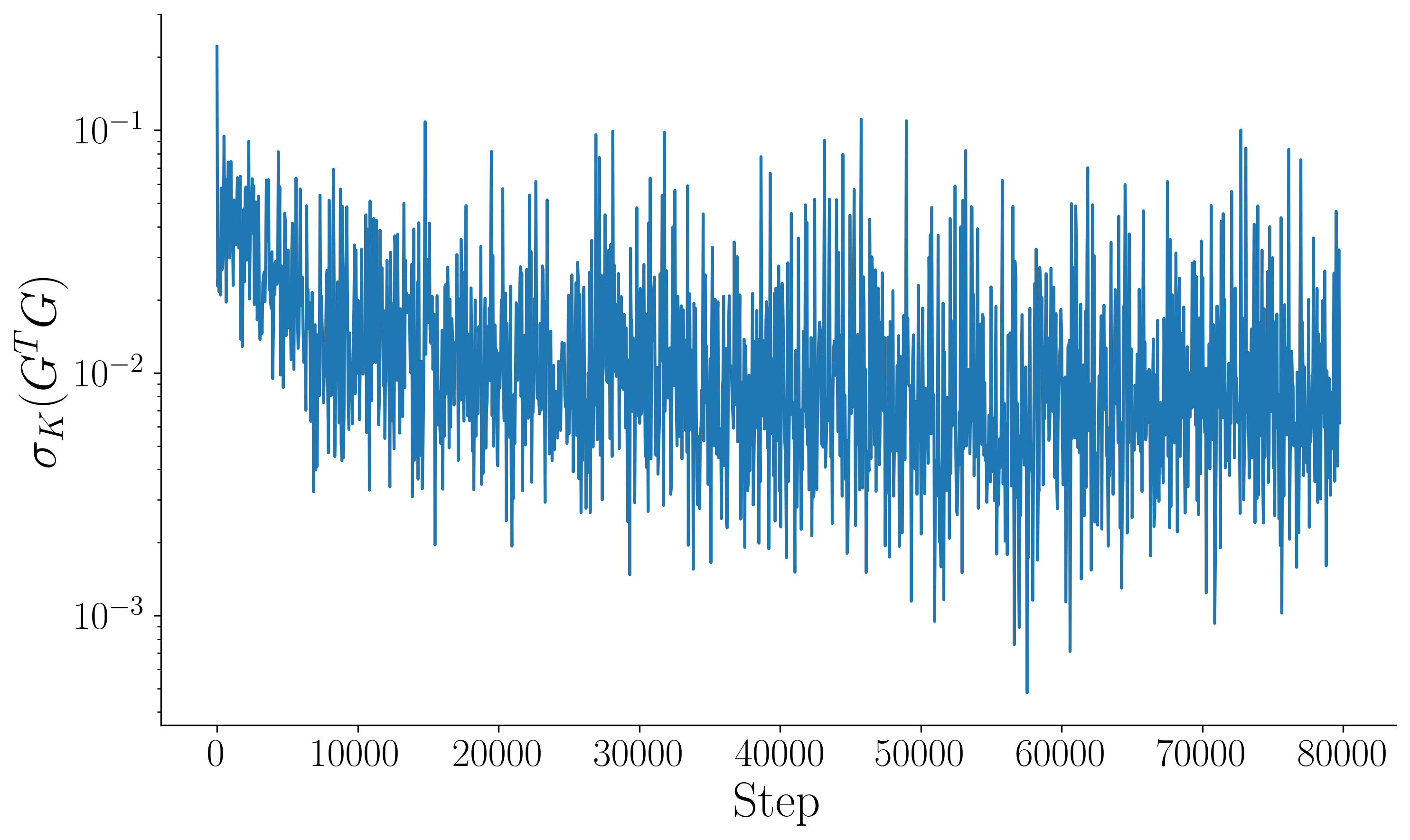}
    \label{fig:nyu_sigma_min}
    }
    \end{subfigure}
    % \hfill
    \begin{subfigure}[QM9]{
    \includegraphics[width=0.47\linewidth]{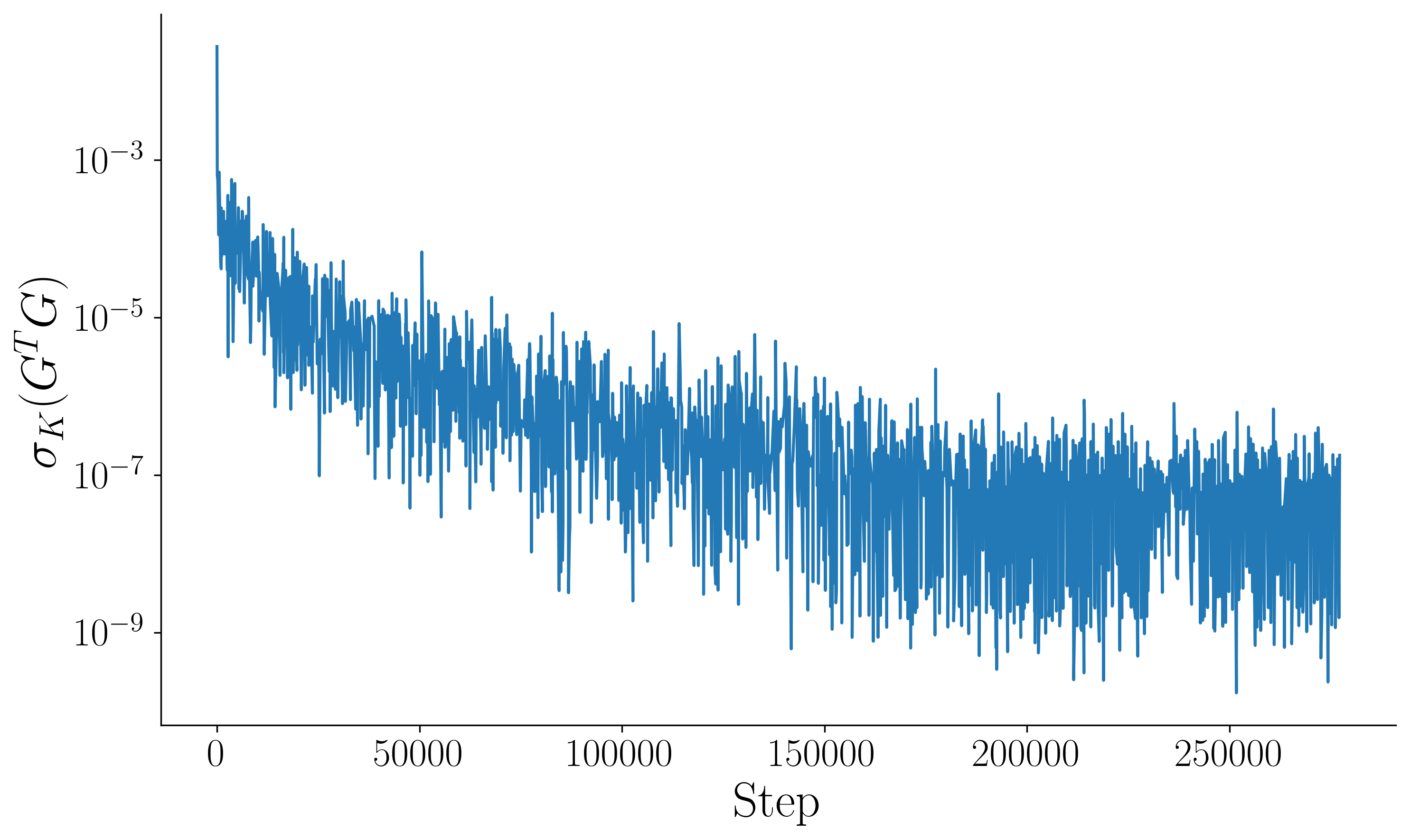}
    \label{fig:qm9_sigma_min}
    }
     \end{subfigure}
    \caption{\textit{Smallest singular value of $G\T G$ throughout the training process.}}
    \label{fig:sigma_min}
\end{figure*}

\begin{table}[!t]
\centering
\small

\caption{\textit{QM9}. Runtime per epoch in minutes.}
\vskip 0.11in

\begin{tabular}{@{}cccc@{}}
\toprule
                &                    & Runtime [Min.]   \\ \midrule
\multicolumn{2}{c}{LS}              & $0.54$ \\
\multicolumn{2}{c}{MGDA}           & $7.25$ \\
\multicolumn{2}{c}{PCGrad}           & $7.47$ \\
\multicolumn{2}{c}{CAGrad}           & $6.85$ \\
\midrule
\multicolumn{2}{c}{\ourmethod{}}     & $6.76$ \\
\multicolumn{2}{c}{\ourmethod{}-5}  & $1.81$ \\
\multicolumn{2}{c}{\ourmethod{}-50} & $0.69$ \\
\bottomrule
\end{tabular}
\label{tab:qm9_runtime}
\end{table}

\end{document}